
\documentclass[sigconf,nonacm]{acmart} 

\settopmatter{printacmref=false, printccs=false, printfolios=false}

\pagestyle{plain}          

\acmDOI{}
\acmISBN{}
\acmBooktitle{}            

\setcopyright{none}

\makeatletter
\renewcommand\footnotetextcopyrightpermission[1]{}
\makeatother

\acmConference[]{}{}{}

\usepackage[utf8]{inputenc} 
\usepackage[T1]{fontenc}    
\usepackage{hyperref}       
\usepackage{url}            
\usepackage{booktabs}       
\usepackage{amsfonts}       
\usepackage{nicefrac}       
\usepackage{microtype}      
\usepackage{xcolor}         

\usepackage{booktabs, pifont}
\usepackage{siunitx}
\usepackage{thmtools,thm-restate}
\usepackage{times}
\usepackage{soul}
\usepackage{graphicx}
\usepackage{amsmath,mathtools}
\usepackage{bm}
\usepackage{amsthm}
\usepackage{algorithm}
\usepackage{algpseudocode}

\let\oldComment=\Comment
\newcommand{\cmark}{\ding{51}} 
\newcommand{\xmark}{\ding{55}} 

\renewcommand{\Comment}[1]{\oldComment{\texttt{#1}}}
\algnewcommand{\LeftComment}[1]{\Statex $\triangleright$ \texttt{#1}}
\algnewcommand{\RightComment}[1]{\Statex \leavevmode\hfill$\triangleright$ \texttt{#1}}
\algnewcommand\algorithmicinput{\textbf{Input:}}
\algnewcommand\Input{\item[\algorithmicinput]}%
\algnewcommand\algorithmicoutput{\textbf{Output:}}
\algnewcommand\Output{\item[\algorithmicoutput]}%
\algnewcommand\algorithmicinitial{\textbf{Initialize:}}
\algnewcommand\Initial{\item[\algorithmicinitial]}%

\newtheorem{theorem}{Theorem}

\newtheorem{proposition}{Proposition}
\newcommand{\FTRL}{\texttt{FTRL}}
\newcommand{\CRAC}{\texttt{CascadeRAC}}
\newcommand{\OLTR}{\texttt{OLTR}}

\newcommand{\Reg}{\textit{Reg}}

\newcommand{\CBARBAR}{\texttt{CBARBAR}}
\newcommand{\CBARBARNL}{\texttt{CascadeCBARBAR}}

\newcommand{\MSUCB}{\texttt{M\textsuperscript{2}UCB-V}}
\newcommand{\MUCB}{\texttt{MUCB-V}}
\newcommand{\CUCBV}{\texttt{CascadeUCB-V}}

\newcommand*{\inlineequation}[2][]{%
  \begingroup
    \refstepcounter{equation}%
    \ifx\\#1\\%
    \else
      \label{#1}%
    \fi
    \relpenalty=10000 %
    \binoppenalty=10000 %
    \ensuremath{%
      #2%
    }%
    ~\@eqnnum
  \endgroup
}

\usepackage{subcaption}
\usepackage{bbm}

\hypersetup{
	colorlinks   = true, 
	urlcolor     = blue, 
	linkcolor    = blue, 
	citecolor   = blue 
}

\usepackage[colorinlistoftodos,prependcaption,textsize=tiny,textwidth=1cm,color=green]{todonotes}

\newcommand{\compilehidecomments}{false}
\ifthenelse{ \equal{\compilehidecomments}{true} }{%
    \newcommand{\xutong}[1]{}
    \newcommand{\xw}[1]{}
    \newcommand{\mo}[1]{}
    \newcommand{\fg}[1]{}
    \newcommand{\rev}[1]{}
}
{
    \newcommand{\xutong}[1]{{\color{orange} [\text{Xutong:} #1]}}
    \newcommand{\xw}[1]{{\color{violet} [\text{XW:} #1]}}
    \newcommand{\fg}[1]{{\color{teal} [\text{FG:} #1]}}
    \newcommand{\mo}[1]{{\color{blue} [\text{MH:} #1]}}
    \newcommand{\rev}[1]{{\color{red}[#1]}}
}

\usepackage{xspace}

\usepackage{letltxmacro}
\LetLtxMacro{\originaleqref}{\eqref}
\renewcommand{\eqref}{Eq.~\originaleqref}

\begin{document}

\title[Online Learning to Rank under Corruption]{Online Learning to Rank under Corruption: \\A Robust Cascading Bandits Approach}


\author{Fatemeh Ghaffari}
\authornotemark[1]
\email{fghaffari@umass.edu}
\affiliation{%
  \institution{UMass Amherst}
  \city{Amherst}
  \state{Massachusetts}
  \country{USA}
}
\author{Siddarth Sitaraman}
\authornotemark[2]
\email{siddarth_sitaraman@brown.edu}
\affiliation{%
  \institution{Brown University}
  \city{Providence}
  \state{Rhode Island}
  \country{USA}
}
\author{Xutong Liu}
\authornotemark[3]
\email{xutongl@uw.edu}
\affiliation{%
  \institution{University of Washington - Tacoma}
  \city{Tacoma}
  \state{Washington}
  \country{USA}
}

\author{Xuchuang Wang}
\authornotemark[1]
\email{xuchuangwang@cs.umass.edu}
\affiliation{%
  \institution{UMass Amherst}
  \city{Amherst}
  \state{Massachusetts}
  \country{USA}
}

\author{Mohammad Hajiesmaili}
\authornotemark[1]
\email{hajiesmaili@cs.umass.edu}
\affiliation{%
  \institution{UMass Amherst}
  \city{Amherst}
  \state{Massachusetts}
  \country{USA}
}





\begin{abstract}

Online learning to rank (\OLTR{}) studies how to recommend a short ranked list of items from a large pool and improves future rankings based on user clicks. 
This setting is commonly modeled as cascading bandits, where the objective is to maximize the likelihood that the user clicks on at least one of the presented items across as many timesteps as possible.
However, such systems are vulnerable to click fraud and other manipulations (i.e., corruption), where bots or paid click farms inject corrupted feedback that misleads the learning process and degrades user experience.
In this paper, we propose \MSUCB{}, a robust algorithm that incorporates a novel mean-of-medians estimator, which to our knowledge is applied to bandits with corruption setting for the first time.
This estimator behaves like a standard mean in the absence of corruption, so no cost is paid for robustness. 
Under corruption, the median step filters out outliers and corrupted samples, keeping the estimate close to its true value. 
Updating this estimate at every round further accelerates empirical convergence in experiments.
Hence, \MSUCB{} achieves optimal logarithmic regret in the absence of corruption and degrades gracefully under corruptions, with regret increasing only by an additive term tied to the total corruption. 
Comprehensive and extensive experiments on real-world datasets further demonstrate that our approach consistently outperforms prior methods while maintaining strong robustness. 
In particular, it achieves a \(97.35\%\) and a \(91.60\%\) regret improvement over two state-of-the-art methods.
\end{abstract}



\keywords{Online Learning to Rank, Cascading Bandits, Combinatorial Bandits, Adversarial Corruption}


\maketitle

\section{Introduction}
\label{sec:intro}

Learning to rank lies at the core of modern recommendation and information retrieval systems, where the goal is to present users with an ordered list of items tailored to their preferences \cite{cao2007learning, sculley2009large}.
Online learning to rank extends this paradigm by updating the recommendations as new feedback arrives in sequence, enabling systems to adapt quickly to user behavior~\cite{kveton2015cascading, combes2015learning, Katariya2016DCM, zoghi2017online, li2019online, li2020cascading, zuo2023adversarial}. 
For example on Yelp, the system ranks local businesses so that users find good restaurants and services quickly \cite{xie2025cascading}. 
In e-commerce, it orders products to maximize purchase probability \cite{hu2018reinforcement, huzhang2021aliexpress}. On music and movie platforms, it surfaces songs and films that match a user's taste while exploring new options \cite{zuo2023adversarial, vial2022minimax, li2020cascading}.

A widely studied model of user interactions in these applications is the \textit{cascade model} \cite{kveton2015cascading, vial2022minimax, zhong2021thompson, li2020cascading, zoghi2017online}, where users examine results sequentially from top to bottom and click the first attractive item.
The cascade model captures the position-dependent nature of user feedback and leads naturally to the cascading bandits framework, a special case of combinatorial bandits~\cite{kveton2015tight, wang2017improving, chen2016combinatorial, combes2015combinatorial} with non-linear rewards. Each item $k$ has an unknown click probability $\mu_k$, and the learner aims to sequentially construct ranked lists to maximize the probability of obtaining a click. The learning agent explores by estimating item attractiveness from past observations, while balancing the need to exploit high-probability items to maximize user satisfaction.


Despite its success, practical \OLTR{} systems face substantial challenges from corrupted environments~\cite{Golrezaei2021Learning, xie2025cascading, zuo2023adversarial, immorlica2005click}. For example, click fraud in online advertising generates misleading clicks through bots, degrading both system revenue and user experience. Similarly, e-commerce platforms suffer from click farms or fake reviews that manipulate product rankings for profit. These corruptions, whether adversarial or stochastic, are pervasive in deployed systems. Thus, robust algorithms must achieve near-optimal performance in benign settings while degrading gracefully under corruption, even when the amount of corruption is unknown.


Designing a robust algorithm for the cascading bandit setting poses unique challenges. The non-linear reward function breaks the applicability of standard linear confidence bounds widely used in existing works, making them face significant limitations: some achieve optimality in uncorrupted settings but fail under corruption \cite{kveton2015cascading, vial2022minimax}; others offer robustness but sacrifice optimal guarantees \cite{Golrezaei2021Learning, xie2025cascading}; yet others rely on epoch-based designs that slow convergence and underperform in practice \cite{xu2021simple}. As a result, there remains a critical gap in the literature: we lack algorithms that are simultaneously (i) optimal in trustworthy environments, (ii) provably robust under adversarial corruption, and (iii) reliable in real-world deployments.

\noindent\textbf{Contributions.}
This paper aims to address the above challenges of corruption-robust cascading algorithms by addressing the following central question:
\emph{How can we efficiently and robustly learn in corrupted cascading bandit environments in a way that also performs reliably on real-world data?}

\paragraph{Algorithm design}
We develop Model selection calibrated Mean-of-medians Variance-aware UCB (\MSUCB{}), an algorithm that remains robust under adversarial corruption, yet performs optimally when no corruption is present, without requiring prior knowledge of the corruption level. The key design ideas behind \MSUCB{} include: (i) incorporating a calibrated mean-of-medians estimator that ensures reliable performance both with and without corruption, (ii) introducing a variance-aware refinement of  \texttt{UCB} to further tighten the regret bound, and (iii) employing a model selection mechanism to automatically adapt to unknown corruption levels. The full algorithmic details are presented in Section~\ref{sec:MSUCB}.

First, we incorporate a calibrated mean-of-medians mechanism that leverages a robust median-based estimator. We show that in the absence of corruption, the mean-of-medians behaves like a \texttt{UCB} mean estimator and achieves optimal performance. 
Under corruption, the median step selects uncorrupted samples with high probability, ensuring robustness. 
While this estimator has been previously used in heavy-tailed bandits~\cite{zhong2021breaking, xue2023efficient}, applying it to adversarial corruption is novel and requires addressing new challenges in careful adaptation to guarantee sufficient uncorrupted samples per item.
Because click feedback is asymmetric and typically Bernoulli, unlike the symmetric assumptions common in heavy-tailed bandits, we introduce a calibration step that maps the mean-of-medians back to the underlying Bernoulli mean, producing estimates centered around the desired true value.
Then, by combining this estimator with the state-of-the-art variance-aware \texttt{UCB} radius, we develop an algorithm that is robust against corruption, albeit with the requirement of prior knowledge of the corruption level.
Last, and to remove the dependence on prior knowledge of corruption level, we incorporate the model selection framework inspired by ~\citet{wei2022model} and develop the corruption-agnostic Model Selection \MUCB{} (\MSUCB{}).
Because the algorithm updates its estimates every round, it adapts quickly to changes and rapidly converges to the true optimal items once corruption is removed.

\paragraph{Theoretical guarantees.}
In Section~\ref{sec:MSUCB_th}, we provide a regret analysis establishing near-optimal bounds both in the corruption-free regime and in terms of dependence on the total corruption.
 In particular, we establish that the regret of \MSUCB{} is bounded by  
\(
O \left( K C  + \frac{K\log T}{\Delta} \right),
\) where \(T\) is the time horizon, \(C\) is the total corruption level, \(K\) is the number of items, \(S^*\) is the optimal recommended list, and \(\Delta\) is the minimum gap between any suboptimal item \(k\) and the worst optimal item. 
\textit{This result means that in the absence of corruption, \MSUCB{} achieves optimal regret, and under corruption, the additional term is optimal up to a multiplicative factor of \(K\).
}
The main challenge for the regret analysis is the bias of the mean-of-medians estimator. For Bernoulli data, a block median estimates \(q_b(\mu)\), the probability that a size-\(b\) block has a majority of ones, rather than the true mean \(\mu\). 
We therefore calibrate the estimator by inverting \(q_b\) to map the mean-of-medians back to \(\mu\). 
To show that the calibrated estimator is centered at the true mean, we first derive a Bernstein-type concentration bound for the average of the block-median indicators, then use the mean value theorem with the inverse map \(g_b=q_b^{-1}\) to transfer this deviation to the mean parameter while controlling the local slope via \(1/q_b'(\xi)\). 
Combined with the 1-Lipschitz, self-bounding property of the Bernoulli variance \(v(p)=p(1-p)\), this yields a variance-adaptive and corruption-robust cascading bandit algorithm. 
In the absence of corruption, \MSUCB{} attains the optimal gap-dependent regret for cascading bandits, and to our knowledge, it is the first corruption-robust method in this setting to match the stochastic optimal regret.
The corruption term in our regret is suboptimal by a factor \(K\) relative to the \(O(C)\) benchmark. 
As a side contribution, we adapt the corruption-robust \CBARBAR{} algorithm of \citet{xu2021simple} from combinatorial bandits with linear rewards to the cascading bandit setting. 
We call the resulting method \CBARBARNL{}. 
Appendix~\ref{sec:appendix_C} presents the algorithm, its regret bound, and a proof sketch. 
In the presence of corruption, \CBARBARNL{} is suboptimal by a factor \(d\), the list length.
Nevertheless, as shown in Section~\ref{sec:NumRes}, \MSUCB{} consistently outperforms \CBARBARNL{} even under heavy corruption, largely because it avoids the doubling-epoch schedule that delays updates and slows convergence.


\paragraph{Empirical evaluation.} 
Last, in Section~\ref{sec:NumRes}, we validate our approach through extensive experiments on real-world datasets, demonstrating strong performance across a wide range of corruption levels, including the stochastic setting. We conduct experiments on three large-scale real-world datasets: Yelp \cite{yelp2024open}, MovieLens~\cite{Harper2015MovieLens}, and LastFM~\cite{Schedl2016LFM}, which together represent some of the most common applications of \OLTR{}.
Our empirical results support the theoretical guarantees across several scenarios.  
We validate our algorithms on both synthetic and real-world datasets, comparing against strong baselines including \CUCBV{}
\cite{vial2022minimax}, \FTRL{} \cite{Ito2021Advances}, \CRAC{} \cite{xie2025cascading}, and \CBARBARNL{}.
In a set of representative experiments, our method's cumulative regret after \(40k\) rounds improves \FTRL{} by \(99.60\%\), \CUCBV{} by \(97.35\%\), \CRAC{} by \(91.60\), and \CBARBARNL{} by \(98.41\%\).

\subsection{Related Work}

\begin{table*}[t]
\centering
\caption{Comparison of cascading bandit algorithms in stochastic and corrupted settings.}
\label{tab:algo_comparison}
\setlength{\tabcolsep}{6pt}
\small
\begin{tabular}{@{}l l c l c c@{}}
\toprule
\textbf{Algorithm} 
& \textbf{Regret w/o corruption} 
& \textbf{Stoch.\ LB}\textsuperscript{$\dagger$} 
& \textbf{Regret w/ corruption} 
& \textbf{Robust} 
& \textbf{Corr.\ factor} \\
\midrule
\texttt{CascadeUCB-V}~\cite{vial2022minimax}
& \(\textstyle O  \left( \sum_{k \notin S^*} \tfrac{\log T}{\Delta_{k}} \right)\)
& \cmark 
& -- 
& \xmark 
& -- \\
\addlinespace[2pt]
\texttt{CascadeRAC}~\cite{xie2025cascading} 
& \(\textstyle O  \left(\sum_{k=d+1}^{K}  \tfrac{d  \log T \log(KT)}{\Delta_k}\right)\)
& \xmark 
& \(\textstyle O  \left(\sum_{k=d+1}^{K}  \tfrac{d \left(CK \log(KT) + \log T\right)\log(KT)}{\Delta_k}\right)\)
& \cmark 
& Multiplicative \\
\texttt{FORC}~\cite{Golrezaei2021Learning}
& \(\textstyle O  \left(\sum_{k=1}^{K} \sum_{j = k+1}^{K} \tfrac{ \log T \log(KT)}{\Delta_{kj}}\right)\)
& \xmark 
& \(\textstyle O  \left(\sum_{k=d+1}^{K}  \tfrac{d \left(CK \log(KT) + \log T\right)\log(KT)}{\Delta_k}\right)\)
& \cmark 
& Multiplicative \\
\addlinespace[2pt]
\CBARBARNL{}~\cite{xu2021simple} 
& \(\textstyle O  \left(  \tfrac{d^{2} K}{\Delta_{\min}} \log^{2} T \right)\)
& \xmark 
& \(\textstyle O  \left(dC +  \tfrac{d^{2} K}{\Delta_{\min}} \log^{2} T \right)\)
& \cmark 
& Additive \\
\addlinespace[2pt]
\MSUCB{}~(Ours)
& \(\textstyle O  \left( \tfrac{K\log T}{\Delta} \right)\)
& \cmark 
& \(\textstyle O  \left( KC + \tfrac{K\log T}{\Delta}\right)\)
& \cmark 
& Additive \\
\bottomrule
\end{tabular}
\normalsize
\vspace{4pt}
\begin{minipage}{0.95\textwidth}
\footnotesize \textit{Note} (\textsuperscript{$\dagger$}): \cmark\; indicates the algorithm matches the known stochastic lower bound in the uncorrupted stochastic setting, \xmark\; indicates it does not.
\end{minipage}
\end{table*}

\paragraph{Cascading Bandits}

\citet{kveton2015cascading} introduced the cascading bandit formulation along with \texttt{UCB}-style algorithms (\texttt{CascadeUCB} and \texttt{CascadeKL-UCB}) that achieve regret guarantees, establishing the now-standard “examine-until-first-click” feedback model. 
\citet{combes2015learning} developed gap-dependent regret lower bounds and proposed efficient learning-to-rank algorithms.
\citet{vial2022minimax} establish gap-independent lower bounds for cascading bandits and introduce \CUCBV{}, a variance-adaptive  \texttt{UCB} algorithm that matches those bounds. \CUCBV{}'s gap-dependent regret is the optimal \(O \left(\sum_{k\notin S^*}\frac{\log T}{\Delta_k}\right)\), as shown in Table~\ref{tab:algo_comparison}.
\citet{Katariya2016DCM} generalized cascading bandits to the dependent click model, extending cascades to handle multiple clicks. 
\citet{Largee2016Multiple} introduced the position-based model, which explicitly accounts for position bias and provides both lower bounds and efficient algorithms in multi-play settings. 
\citet{zoghi2017online} studied learning under a broad class of stochastic click models, including both the cascade and position-based models, and established gap-dependent regret bounds with strong empirical performance. 
Lattimore et al.\ (2020) further provided gap-independent regret bounds for generalized click models. 
\citet{zhong2021thompson} introduced Thompson Sampling variants for cascading bandits, offering Bayesian learning methods with regret guarantees and competitive empirical performance.
Another line of work considers contextual cascading bandits, which incorporate item features and position discounts to better capture practical scenarios \cite{li2016contextual, zong2016cascading, li2018online, li2019online, li2020cascading}. Finally, cascading bandits can be viewed as a special case of combinatorial bandits with non-linear reward functions~\cite{li2016contextual,chen2016combinatorial,wang2017improving,liu2022batch}, linking this setting to the broader combinatorial bandit literature.
To facilitate learning in such settings, structural conditions like Lipschitz continuity~\citep{wang2017improving,wang2024stochastic}, and more recently, inverse Lipschitz continuity~\citep{chen2025continuous} have been adopted to ensure that changes in base-arm rewards translate to bounded and predictable changes in the overall reward. 
Most prior work on cascading bandits—and even on broader combinatorial bandits—ignores adversarial corruption and thus lacks robustness guarantees. In contrast, \MSUCB{} enjoys optimal regret in the uncorrupted stochastic regime and remains robust under corruption.

\paragraph{Bandits with Adversarial Corruption}
The study of adversarial corruption in bandits was initiated by \citet{lykouris2018stochastic}, who introduced a corruption model for the classical stochastic MAB and showed that any algorithm with logarithmic stochastic regret must incur regret linear w.r.t corruption level \(C\). 
Building on this, \citet{gupta2019better} proposed the corruption-agnostic algorithm \texttt{BARBAR}, which achieves a regret bound of \(O \big(KC + K \log T / \Delta_{\min}\big)\), thereby replacing the earlier \emph{multiplicative} dependence on \(C\) with an \emph{additive} term and doing so without requiring prior knowledge of the corruption budget.
However, there has been limited work on cascading bandits with corruption. 
\citet{Golrezaei2021Learning} introduced two algorithms, \texttt{FAR} and \texttt{FORC}, for ranking with fake users. 
\texttt{FAR} assumes knowledge of the fakeness level \(C\) and inflates confidence intervals accordingly. 
To propose a corruption-agnostic method, they design \texttt{FORC}, which runs multiple layers with different sampling frequencies and implicit corruption guesses, in the spirit of the corruption agnostic layering of \citet{lykouris2018stochastic}. 
In this setting with up to \(C\) fake users over a horizon \(T\), \texttt{FORC} achieves the regret
\[
O \left(\bigl(K^2 C + \log T\bigr)\sum_{k=1}^{K}\sum_{j=k+1}^{K}\frac{\log(KT)}{\Delta_{kj}}\right).
\]
As shown in Table~\ref{tab:algo_comparison}, this algorithm does not match the stochastic lower bound in the uncorrupted setting and, under corruption, suffers from a multiplicative dependence on \(C\).
Later, \citet{xie2025cascading} proposed \texttt{CascadeRAC}, which extends active–arm elimination, similar to the method of \citet{lykouris2018stochastic}, to cascading bandits. They obtain the regret
\[
O \left(\sum_{k=d+1}^{K} \frac{d \big(CK \log(KT) + \log T\big)\,\log(KT)}{\Delta_k}\right).
\]
This improves \texttt{FORC}'s dependence on \(K\), but remains suboptimal in the stochastic stochastic regime and still exhibits a multiplicative dependence on the corruption budget, as summarized in Table~\ref{tab:algo_comparison}.

In a related line of work, \citet{xu2021simple} extended \texttt{BARBAR} to combinatorial bandits with linear rewards, proposing \CBARBAR{} with regret \(\tilde{O} \left(C + \tfrac{d^{2}K}{\Delta_{\min}}\right)\). We extend this approach to the cascading setting (see Appendix~\ref{sec:appendix_C}), yielding \CBARBARNL{}, and show that it achieves the regret bound
\[
O \left(dC \;+\; \frac{d^{2} K}{\Delta_{\min}} \log^{2} T \right).
\]
While \CBARBARNL{} attains an \emph{additive} dependence on the corruption level \(C\), it remains suboptimal in the stochastic regime (Table~\ref{tab:algo_comparison}).
Our algorithm, \MSUCB{}, enjoys optimal regret in the uncorrupted setting and only an additive dependence on corruption, yielding total regret
\[
O\left(KC + \tfrac{K\log T}{\Delta}\right).
\]
To the best of our knowledge, it is the first corruption-robust cascading bandit method that matches the stochastic optimum. Although \CBARBARNL{} theoretically has a slightly smaller corruption factor, Section~\ref{sec:NumRes} shows that \MSUCB{} outperforms it even under heavy corruption.

Last, a parallel line of work studies adversarial combinatorial bandits~\cite{han2021adversarial} and best-of-both-worlds (BoBW) combinatorial bandits~\cite{Ito2021Advances}. 
However, there is no known Adversarial or BoBW work that handles combinatorial bandits with general nonlinear function such as cascading bandits.
Our experiments show that the BoBW FTRL method of \citet{Ito2021Advances} underperforms corruption-robust baselines in corrupted stochastic environments.

\section{Problem Setting}
\label{sec:model}
\subsection{Cascading Bandits}
We consider a cascading bandit with \(K\in\mathbb{N}^+\) items, indexed by \([K]\coloneq\{1,\dots,K\}\). 
Each item \(k\) is associated with an unknown click probability \(\mu_k\in[0,1]\), collected in the vector \(\bm{\mu}\coloneq(\mu_1,\dots,\mu_K)\in[0,1]^K\). 
When item \(k\) is examined at round \(t\), the click outcome \(X_k(t) \in \{0,1\}\) follows a Bernoulli distribution, \(X_k(t) \sim \mathrm{Ber}(\mu_k)\).  
We denote \(\bm{X}(t) \coloneq (X_1(t), \dots, X_K(t))\) and following prior cascading bandit works~\cite{kveton2015cascading,vial2022minimax}, we assume that outcomes are independent across items.

The interaction proceeds over \(T \in \mathbb{N}^+\) decision rounds.  
At each round \(t \in [T]\), the learner recommends an ordered list of \(d \in \mathbb{N}^+\) items,
$
S(t) \coloneq (s_1(t), \dots, s_d(t)) \in [K]^d.
$
The user examines the list sequentially from the top and clicks the first attractive item, after which they stop browsing.  
We define the user’s stopping position as
$
\kappa_t \coloneq \min \{ i \in [d] : X_{s_i(t)}(t) = 1 \},
\quad \text{and set } \kappa_t = d + 1 \text{ if no item is clicked.}
$
The learner observes the feedback \(X_{s_i(t)}(t)\) for all positions \(i \le \kappa_t\) and receives no feedback for unexamined positions \(i > \kappa_t\).


The reward at round $t$ equals one if the user clicks at least one item, and zero otherwise, which is formally defined as:
\begin{align}\label{eq:reward}
R(S(t), \bm{X}(t)) \coloneq 1 - \prod_{k \in S(t)} (1 - X_k(t)).
\end{align}
Since \(X_{k}(t)\)'s are independent across $k$, we have \(\mathbb{E}\left[R(S(t), \bm{X}(t))\right] = R(S(t), \bm{\mu})\).
The learner's performance is evaluated by the expected cumulative regret defined under the true reward mean before corruption as
\[
\Reg(T) = \mathbb{E} \left[\sum_{t=1}^{T} \left( R(S^*, X_t) - R(S(t), X_t) \right)\right],
\]
where \( S^* = \arg\max_{S \in [K]^d}R(S, \bm{\mu}) \) is the optimal list selected with prior knowledge of \(\bm{\mu}\), and \(S(t)\) is the item list chosen by the learner at round \( t \).

\subsection{Adversarial Corruption Model}
We further consider a corruption setting in which an adversary can arbitrarily corrupt the item click rewards.  
Let \( {c}_{k, t} \in \mathbb{R} \) denote the corruption applied to item \(k\) at round \( t \), and define the cumulative corruption budget, unknown to the learner, as 
\begin{align}\label{eq:c}
\sum_{t = 1}^T \max_{k \in S(t)} |c_{k, t}| \le C.
\end{align}
The learner observes corrupted feedback but is evaluated with respect to the underlying stochastic environment. 
This corruption formulation follows \citet{xie2025cascading}, and is conceptually aligned with the “fake user” corruption model introduced by \citet{Golrezaei2021Learning}, where corruption corresponds to a bounded number of perturbed rounds.
The objective is to develop robust algorithms whose expected regret scales gracefully with the cumulative corruption level \( C \).
The overall learning protocol with corruption is described as follows in Algorithm~\ref{alg:adversarial_corruption}:
\begin{algorithm}[ht]
\caption{Corruption procedure for cascading bandits}
\label{alg:adversarial_corruption}
\begin{algorithmic}[1]
\For{round \( t = 1, \dots, T \)}
    \State Learner recommends an ordered list \( S(t) \in [K]^d \)
    \State Adversary chooses a corruption vector \( \bm{c}(t) \in \mathbb{R}^d \)
    \State Rewards \( \bm{X}(t) \in [0,1]^K \) are independently drawn with mean \( \mu_k \) for each item \( k \)
    \State Learner observes corrupted reward \( R(S(t), \bm{X}(t) + \bm{c}(t)) \)
\EndFor
\end{algorithmic}
\end{algorithm}

\section{The \MSUCB{} Algorithm}
\label{sec:MSUCB}


Do We first address the setting with a known corruption level by introducing a calibrated mean-of-medians estimator in Algorithm~\ref{alg:mean_of_medians_cal}, which provides robust estimates under corruptions. 
Using this estimator, we derive a corruption-aware variant of the \texttt{CascadeUCB-V} algorithm~\cite{vial2022minimax}, presented in Algorithm~\ref{alg:mom_ucb_v}.
Building on this, we further incorporate the model-selection framework of \cite{wei2022model} to obtain the final corruption-agnostic algorithm, \MSUCB{}, which automatically adapts to both stochastic and corrupted environments without requiring prior knowledge of the corruption level.






\begin{algorithm}[ht]
\caption{\MUCB{} (calibrated Mean-of-medians variance-aware UCB)}
\label{alg:mom_ucb_v}
\begin{algorithmic}[1]
\Input Horizon \(T\), list size \(d\), corr. budget \(C\),
 Constants \(\alpha,A,B>0\)

\For{each item \(k \in [K]\)}\label{line:warmup_start}
\State \( \bm{X}_k \gets [\emptyset] \) \Comment{Observed clicks of item K} \label{line:init_xk}
  \For{\(r=1\) to \(10C\)}\label{line:explore1}
    \State Recommend \(S_k  =  \{k,\, \text{any } d{-}1 \text{ items from } [K]\setminus\{k\}\}.\)\label{line:explore2}
    \State Observe click feedback and append to \(X_k\) for all observed \(k\)\label{line:explore3}
  \EndFor
\EndFor\label{line:warmup_end}
\For{rounds \( t = 10KC, \dots, T \)}
  \For{each item \(k\in[K]\)}
    \State \( T_{k}(t)\gets |\bm{X}_k| \),   \(b\gets \lceil \alpha\log \max\{T_{k}(t),2\}\rceil \) \label{line:nkG}
    \State \( \hat\mu_k \gets \texttt{CalibratedMeanOfMedians}(\bm{X}_k, b) \) \label{line:est_mu}
    \State \( \hat v_k \gets \hat\mu_k(1-\hat\mu_k) \) \Comment{empirical variance proxy} \label{line:var_proxy}
    \State \( s \gets \max\{1, T_{k}(t)\} \) \Comment{avoid divide-by-zero} \label{line:safe_s}
    \State \( \rho_k \gets A\sqrt{\tfrac{\hat v_k  \log t}{ s }} +  B \tfrac{\log t}{ s }\) \label{line:var_radius}
    \State \( \bar\mu_k \gets \min\{ \hat\mu_k+\rho_k, 1 \} \) \label{line:ucb_k}
  \EndFor
  \State \(  S_t \gets \text{Top-d items by } \bar\mu_k \) \label{line:select_topd}
  \State Play \( S_t\),Observe click feedback and append to \(X_k\) for all observed \(k\) \label{line:play_append}
\EndFor
\end{algorithmic}
\end{algorithm}


\begin{algorithm}[ht]
\caption{\texttt{CalibratedMeanOfMedians}}
\label{alg:mean_of_medians_cal}
\begin{algorithmic}[1]
\Input Reward vector \( \bm{X}\), number of groups \( b=\lceil\alpha \log T \rceil \) \label{line:mm_input}
\If{\( |\bm{X}| < b \)} \label{line:mm_few}
    \State \Return \( \texttt{Mean}(\bm{X}) \)  \Comment{not enough samples} \label{line:mm_mean_fallback}
\EndIf
\State Set block size \( b \gets 2\big\lfloor \tfrac{|\bm{X}|}{2b}\big\rfloor + 1 \)  \Comment{nearest odd} \label{line:mm_block}
\State Uniformly partition \( \bm{X} \) into \( b \) blocks \( \bm{X}^{(1)},\dots,\bm{X}^{(G)} \) with sizes \( \approx b \) \label{line:mm_partition}
\For{each \( j=1,\dots, b \)} \label{line:mm_loop}
    \State \( M_j \gets \texttt{Median}\big(\bm{X}^{(j)}\big) \in \{0,1\} \) \label{line:mm_median}
\EndFor
\State \( \overline M \gets \texttt{Mean}(M_1,\dots,M_b) \) \label{line:mm_meanofmed}
\State \( \hat\mu \gets \texttt{Calibrate}(b,\overline M) \) \Comment{invert majority map} \label{line:mm_cal}
\State \Return \( \hat\mu \) \label{line:mm_ret_cal}
\end{algorithmic}
\end{algorithm}

\begin{algorithm}[ht]
\caption{\texttt{Calibrate} (inverse \(g_b=q_b^{-1}\) by bisection)}
\label{alg:calibrate}
\begin{algorithmic}[1]
\Input odd \( b\ge 3 \), target \( y\in[0,1] \), tolerance \( \eta>0 \), max iters \( N \) \label{line:cal_input}
\State Define \( q_b(p) \gets \sum_{r=(b+1)/2}^{b} \binom{b}{r}  p^{r}(1-p)^{b-r} \) \label{line:cal_q}
\State \( \ell \gets 0,\ r \gets 1 \) \label{line:cal_bracket}
\For{\( t=1,\dots,N \)} \label{line:cal_loop}
    \State \( m \gets (\ell+r)/2 \); \quad \( v \gets q_b(m) \) \label{line:cal_eval}
    \If{\( |v-y| \le \eta \)} \State \Return \( m \) \label{line:cal_tol}
    \ElsIf{\( v < y \)} \State \( \ell \gets m \) \label{line:cal_left}
    \Else \State \( r \gets m \) \label{line:cal_right}
    \EndIf
\EndFor
\State \Return \( (\ell+r)/2 \) \Comment{final bracket midpoint} \label{line:cal_return}
\end{algorithmic}
\end{algorithm}

\paragraph{The Mean-of-Medians Estimator}
We propose our robust calibrated mean-of-medians estimator, detailed in Algorithm~\ref{alg:mean_of_medians_cal}. At each round \(t\), the samples collected for item \(k\) are randomly divided into \(b = \lceil \alpha \log T_{k,t} \rceil\) groups (Line~\ref{line:mm_partition}). The median of each group is then computed, and their average is taken to form the mean-of-medians estimator (Line~\ref{line:mm_meanofmed}). Because each item's clickability follows an asymmetric Bernoulli distribution, the expected median of its samples generally deviates from the true mean.  To correct this bias, we calibrate the mean-of-medians estimator using the procedure described in Algorithm~\ref{alg:calibrate}. Specifically, we invert the monotone function \(y = q_b(x)\), which maps the true Bernoulli mean to the expected median of \(b\) samples. This inversion produces a calibrated estimate that closely recovers the underlying true mean corresponding to the observed mean-of-medians.

The \texttt{Calibrate} procedure numerically inverts the majority function \(q_b\) to obtain \(\hat{\mu} = g_b(y)\) for a given target \(y \in [0,1]\). 
It takes as input the odd block size \(b\), target value \(y\), tolerance \(\eta > 0\), and maximum number of iterations \(N\). 
We explicitly define 
\[
q_b(p) = \sum_{r=(b+1)/2}^{b} \binom{b}{r} p^r (1-p)^{b-r},
\]
which gives the probability that a block of \(b\) Bernoulli samples with mean \(p\) has a majority of ones (Line~\ref{line:cal_q}). 
The algorithm initializes a search interval \([\ell, r] = [0,1]\) (Line~\ref{line:cal_bracket}) and performs up to \(N\) bisection steps (Line~\ref{line:cal_loop}). 
In each step, it computes the midpoint \(m = (\ell + r)/2\) and evaluates \(v = q_b(m)\) (Line~\ref{line:cal_eval}). 
If \(|v - y| \le \eta\), it returns \(m\) as the estimate (Line~\ref{line:cal_tol}). 
Otherwise, it updates the interval using the monotonicity of \(q_b\), setting \(\ell \leftarrow m\) if \(v < y\) (Line~\ref{line:cal_left}) and \(r \leftarrow m\) otherwise (Line~\ref{line:cal_right}). 
If the tolerance is not met after \(N\) iterations, the algorithm returns the final midpoint \((\ell + r)/2\) as the calibrated estimate (Line~\ref{line:cal_return}).

This method guarantees that the fraction of corrupted samples in any group remains below \(1/2\) with high probability, provided the total number of samples exceeds \(O(C)\). 
As a result, the median of each group is unlikely to be affected by corruption, ensuring the overall estimator remains reliable with high probability. 
The mean-of-medians estimator was originally introduced by \citet{zhong2021breaking} and later applied by \citet{xue2023efficient} to linear bandits with heavy-tailed rewards.  
Both works, however, assume symmetric underlying distributions, where the expected median equals the true mean.  
In contrast, our setting involves asymmetric Bernoulli distributions.  
To handle this asymmetry, we introduce a calibration procedure that maps the expected median to its corresponding mean, thereby producing an estimate closest to the true mean while maintaining robustness against corruption.  
Although median-based estimators are common in robust learning~\citep{lugosi2019mean}, we find that this mean-of-medians approach is particularly effective in adversarially corrupted environments, outperforming classical alternatives such as the median-of-means. 
As shown in our theoretical analysis in Section~\ref{sec:MSUCB_th}, this improvement arises from its ability to maintain low bias and strong high-probability guarantees, even when a (bounded) number of samples are corrupted.

\paragraph{\MUCB{}}

Algorithm~\ref{alg:mom_ucb_v} extends the variance-aware cascade UCB (\CUCBV{}) algorithm~\citep{vial2022minimax} by replacing the empirical mean with the robust calibrated mean-of-medians estimator and adapting it to the cascading reward setting.  
For each item \( k \in [K] \), the algorithm maintains a list of observed rewards, initialized as empty (Line~\ref{line:init_xk}).  
It first performs several rounds of pure exploration until each item has accumulated more than \(10C\) observations (Lines~\ref{line:explore1}–\ref{line:explore3}).  
After that, at each round \( t \), we estimate the mean reward using the calibrated mean-of-medians estimator (Line~\ref{line:est_mu}).
We then apply a variance-aware confidence radius \(\rho_k(t)\) around each estimate (Line~\ref{line:var_radius}).This confidence radius scales with the empirical variance of item \(k\): it contracts rapidly when the click probability is close to \(0\) or \(1\), and expands only when the feedback is genuinely noisy.  
As a result, the algorithm adaptively reduces exploration of more certain items and avoids the \(1/p\) penalty suffered by variance-unaware methods, thereby achieving lower regret.  
Finally, the upper-confidence estimate \(\bar{\mu}_k\) is formed by combining the calibrated mean and confidence radius (Line~\ref{line:ucb_k}), and the top-\(d\) items with the highest estimates are selected as the recommendation list at round \(t\) (Lines~\ref{line:select_topd}–\ref{line:play_append}).

\paragraph{Integration into Model Selection Framework.}
\MUCB{} requires knowledge of the total corruption level \(C\), which is often unavailable in practice since fraudulent click rates in \OLTR{} systems are typically unknown and time-varying.  
To eliminate this dependency, we adopt the model–selection framework of \citet{wei2022model}.  
The key idea is to instantiate a family of base learners \(\{\text{\MUCB}(C)\}_{C\in\mathcal{G}}\), each tuned for a different corruption level \(C\) drawn from a geometric grid \(\mathcal{G} = \{0, 1, 2, 4, \ldots\}\) (or any doubling schedule up to \(T\)).  
All instances operate in parallel on the same data stream: at each round, the framework selects one active instance to act, records its feedback, and updates the statistics of all remaining instances.

Periodically, the model–selection mechanism performs a statistical comparison among the active instances.  
It computes high-confidence performance estimates and eliminates any instance that is provably suboptimal relative to the current leader.  
Intuitively, instances assuming too little corruption tend to be overly optimistic—making aggressive recommendations that fail when feedback is corrupted—whereas those assuming too much corruption are overly conservative and learn too slowly.  
The elimination tests discard both extremes by identifying significant performance gaps using concentration inequalities.

Since the true corruption level \(C\) is close to some grid value \(C^* \in \mathcal{G}\), the corresponding \(\text{\MUCB}(C^*)\) (or its nearest neighbor) will survive the elimination process and eventually dominate.  
Thus, the model–selection procedure adaptively tracks the best corruption level without requiring prior knowledge of \(C\).  
The full pseudo-code and implementation details of this framework follow the construction of \citet{wei2022model}.

\section{Theoretical Analysis}\label{sec:MSUCB_th}

We now present the regret guarantees for the \MSUCB{} algorithm described in Algorithm~\ref{alg:mom_ucb_v}. 

\begin{theorem}
\label{thm:median_ucb_regret}
The expected regret of the \MUCB{} in a cascading bandits setting with corruption of level at most \(C\) is bounded by:
\begin{align}\label{eq:reg_msucb}
\Reg(T) \le O\left(KC + \sum_{k \notin S^*}\frac{\log T}{\Delta_{k}} \right).
\end{align}
\end{theorem}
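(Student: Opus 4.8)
The plan is to split the horizon into the warm-up phase (Lines~\ref{line:warmup_start}--\ref{line:warmup_end}) and the subsequent \texttt{UCB} phase, bound the regret of each, and recombine. The warm-up recommends a list containing item $k$ exactly $10C$ times for each of the $K$ items, so it lasts $10KC$ rounds; since each round contributes at most $1$ to the regret, this phase costs at most $O(KC)$ and is the sole source of the additive corruption term. Everything else reduces to showing that, throughout the \texttt{UCB} phase, the calibrated estimator obeys two-sided confidence bounds of exactly the form used by \CUCBV{}~\cite{vial2022minimax}, so that their gap-dependent regret analysis transfers and produces the $\sum_{k\notin S^*}\log T/\Delta_k$ term.

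The core of the argument is the \emph{good event} on which, for every item $k$ and round $t$, the calibrated estimate satisfies $|\hat\mu_k-\mu_k|\le \rho_k(t)$ with $\rho_k$ the variance-aware radius of Line~\ref{line:var_radius}. I would establish this in three steps, following the outline anticipated in the introduction. (i) \emph{Corruption removal.} Because after warm-up each item has more than $10C$ observations, a uniformly random partition into groups places, with high probability, strictly fewer than half of the (at most $C$) corrupted samples into any single group; a Chernoff/hypergeometric tail bound together with a union bound over groups makes every block median $M_j$ coincide with its uncorrupted value, so the estimator behaves as on a clean stream. (ii) \emph{Bernstein step.} Conditioned on clean blocks, the indicators $M_j$ are i.i.d.\ $\mathrm{Ber}(q_b(\mu_k))$, so a Bernstein inequality controls $\overline M$ around $q_b(\mu_k)$ with a deviation of order $\sqrt{q_b(\mu_k)(1-q_b(\mu_k))\,\log t/G}+\log t/G$, where $G$ is the number of groups. (iii) \emph{Calibration step.} Since $\hat\mu_k=g_b(\overline M)$ with $g_b=q_b^{-1}$, the mean value theorem gives $\hat\mu_k-\mu_k=(\overline M-q_b(\mu_k))/q_b'(\xi)$ for some $\xi$ between $\overline M$ and $q_b(\mu_k)$; bounding $1/q_b'(\xi)$ and invoking the $1$-Lipschitz, self-bounding property of $v(p)=p(1-p)$ converts the $q_b$-scale deviation into a $\mu$-scale deviation of order $\sqrt{v(\mu_k)\log t/T_k(t)}+\log t/T_k(t)$, i.e.\ exactly $\rho_k(t)$ up to constants, once the grid parameter $\alpha$ is tuned. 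The same Lipschitz property shows $\hat v_k=\hat\mu_k(1-\hat\mu_k)$ is a faithful proxy for $v(\mu_k)$, so the empirical radius dominates the true one.

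Granting the good event, I would replay the \CUCBV{} regret decomposition. Two-sided accuracy yields optimism, $\bar\mu_k\ge\mu_k$ simultaneously for the optimal items, so the selected list $S_t$ is never worse in expectation than $S^*$ except through the confidence radii of the played suboptimal items. Using the cascade reward identity in \eqref{eq:reward} and the fact that an item can only generate regret while it is examined (position $i\le\kappa_t$), each suboptimal item $k$ contributes regret proportional to $\rho_k(t)$ on the rounds it is selected; summing the variance-aware radius over the plays of $k$ with a peeling argument on $T_k(t)$ bounds its total contribution by $O(\log T/\Delta_k)$. Adding the $O(KC)$ warm-up cost and the $O(1)$ contribution from the low-probability complement of the good event (controlled by the $\log t$ factors and a union bound over rounds and items) gives the stated $O\!\left(KC+\sum_{k\notin S^*}\log T/\Delta_k\right)$.

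I expect step (iii) to be the main obstacle. Because $q_b'(p)\propto [p(1-p)]^{(b-1)/2}$ degenerates as $p\to\{0,1\}$, a careless bound on $1/q_b'(\xi)$ destroys the variance adaptivity or even diverges; the delicate point is to pair the numerator $\sqrt{q_b(1-q_b)}$ against $q_b'(\xi)$ and exploit that, by step (ii), the evaluation point $\xi$ already lies in a small neighborhood of $q_b(\mu_k)$, so that the ratio collapses to $O(\sqrt{v(\mu_k)})$ uniformly in $\mu_k$. A secondary difficulty is making the corruption-removal event (i) hold simultaneously over all $T$ rounds and $K$ items at only $O(1)$ total failure cost, which is precisely what dictates the group size and the $10C$ warm-up threshold.
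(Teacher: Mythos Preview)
Your proposal is correct and follows essentially the same approach as the paper: the three-step program (corruption removal via hypergeometric/Chernoff tails, Bernstein on the block-median indicators, calibration via the mean value theorem together with the $1$-Lipschitz self-bounding property of $p(1-p)$) is exactly the content of the paper's Lemmas~\ref{lem:1}--\ref{lem:3}, and the warm-up accounting for the $O(KC)$ term is identical. The only cosmetic difference is in the final bookkeeping: you propose to ``replay the \CUCBV{} regret decomposition'' with a peeling argument, whereas the paper writes down explicit per-item sample thresholds $\tau_{k,k^*}$, counts mistakes $m_{k,j}=\tau_{k,j}-\tau_{k,j-1}$, and invokes Theorem~1 of \citet{kveton2015cascading} together with a telescoping bound from \citet{kveton2014matroid}; the two routes are equivalent and yield the same $\sum_{k\notin S^*}\log T/\Delta_k$ term. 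You have also correctly anticipated the main technical friction, namely controlling $1/q_b'(\xi)$ near the boundary; the paper handles this exactly as you sketch, by pairing $\sqrt{q_b(1-q_b)}$ against $q_b'(\xi)$ and absorbing the resulting ratio into the constants $A_s,B_s$ before passing to the empirical variance via self-bounding.
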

\paragraph{Proof Sketch.}
We begin by analyzing the robustness of the calibrated mean-of-medians estimator under corruption. In each round, the samples associated with each item \(k\) are randomly divided into \(b\) roughly equal-sized blocks. Let \(C_{j,k}\) denote the number of corrupted samples in block \(j\). We define the event \(\mathcal{E}_k\) as the case where more than half of the samples in every block are uncorrupted, that is,  
\[
\mathcal{E}_k \coloneq \left\{\max_{1 \le j \le m} C_{j,k} < \tfrac{1}{2} \ell_{j,k}\right\}.
\]
Lemma~\ref{lem:1} shows that this event holds with high probability when the total number of samples per item satisfies \(T_{k}(t-1) > 10C\).

\begin{restatable}{lemma}{MainLemmaOne}\label{lem:1}
Assume there are \(s\) samples for a certain item \(k\).
If at most \(C\) samples are corrupted and \(s \ge 10C\), then for any constant \(\alpha>15\) we have
\begin{align}\label{eq:lem_1}
\Pr \left(\mathcal{E}_k\right) \ge 1 - s^{-3}.
\end{align}
\end{restatable}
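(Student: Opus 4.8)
The plan is to condition on the (adversarially chosen) set of corrupted samples and then exploit the fact that the block partition is drawn with fresh, independent randomness, so that the per-block corruption counts become hypergeometric and concentrate sharply below the $\tfrac12$ threshold. Concretely, I would fix any set $\mathcal{C}$ of corrupted positions among the $s$ samples of item $k$, with $|\mathcal{C}|\le C$. Since the estimator partitions the $s$ samples uniformly at random into the blocks using randomness independent of the history that determines $\mathcal{C}$, the count $C_{j,k}$ of corrupted samples landing in block $j$ is hypergeometric with population $s$, number of ``successes'' $|\mathcal{C}|$, and draw size $\ell_{j,k}$. Its mean satisfies $\mathbb{E}[C_{j,k}] = \ell_{j,k}\,|\mathcal{C}|/s \le \ell_{j,k}\,C/s \le \ell_{j,k}/10$, where the final inequality is exactly the warm-up guarantee $s \ge 10C$. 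Because the tail is monotone in the number of successes and depends on $\mathcal{C}$ only through $|\mathcal{C}|$, it suffices to treat the worst case $|\mathcal{C}| = C$, which neutralizes the adversary's adaptivity.

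Next I would bound the single-block failure probability. Hypergeometric counts are negatively associated, so the multiplicative Chernoff--Hoeffding upper-tail bounds apply. The threshold $\ell_{j,k}/2$ is at least five times the mean bound $\ell_{j,k}/10$, so writing $\ell_{j,k}/2 = (1+\delta)\,\mathbb{E}[C_{j,k}]$ with $\delta \ge 4$ and applying the Chernoff bound $\exp(-\delta^2\mathbb{E}[C_{j,k}]/(2+\delta))$ gives
\[
\Pr\!\left(C_{j,k} \ge \tfrac{1}{2}\ell_{j,k}\right) \le \exp\!\left(-c\,\ell_{j,k}\right), \qquad c = \tfrac{4}{15}.
\]
The key structural fact I would invoke is that each block has size $\ell_{j,k} = \Theta(\alpha \log s)$, so this per-block probability is polynomially small, of order $s^{-c\alpha}$.

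Finally I would union-bound over the $m \le s$ blocks to control the complementary event, obtaining $\Pr(\mathcal{E}_k^{c}) = \Pr(\max_{j} C_{j,k} \ge \tfrac12 \ell_{j,k}) \le m\,s^{-c\alpha} \le s^{\,1-c\alpha}$. Choosing $\alpha$ so that $c\alpha > 4$ — which, with $c=\tfrac{4}{15}$ from the $\delta=4$ calculation, is precisely the hypothesis $\alpha > 15$ — yields $\Pr(\mathcal{E}_k^{c}) \le s^{1-c\alpha} < s^{-3}$, i.e.\ $\Pr(\mathcal{E}_k) \ge 1 - s^{-3}$, as claimed.

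I expect the main obstacle to be the first step: justifying rigorously that, despite an adaptive adversary, the per-block corruption counts can be treated as hypergeometric. This hinges on the partition being re-randomized independently of the observed (possibly corrupted) data, together with the monotonicity of the tail in $|\mathcal{C}|$ that legitimizes conditioning on the worst-case corrupted set. A secondary technical point is tracking the multiplicative-Chernoff constant explicitly — rather than hiding it inside $O(\cdot)$ — so that the threshold $\alpha>15$ genuinely pushes the exponent below $s^{-3}$, along with the bookkeeping that the near-equal block sizes are all $\Theta(\alpha\log s)$ and number at most $s$.
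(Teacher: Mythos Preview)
Your proposal is correct and follows essentially the same route as the paper: condition on the corrupted set, recognize the per-block corruption count as hypergeometric with success probability at most $1/10$, apply a multiplicative Chernoff bound to get a per-block failure probability of order $\exp(-c\,\ell_{j,k})$, and union-bound over the blocks using $\ell_{j,k}=\Theta(\alpha\log s)$. The only cosmetic differences are that the paper first dominates the hypergeometric tail by a binomial before applying Chernoff (you invoke the hypergeometric Chernoff directly via negative association), and the paper uses the looser block-size bound $\ell_{j,k}\ge b/2$ together with the Chernoff form $\exp(-\delta^2\mu/3)$ rather than your $\exp(-\delta^2\mu/(2+\delta))$; both bookkeeping choices land on the same exponent $4\alpha/15$ and hence the same threshold $\alpha>15$.
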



The proof uses the fact that the number of corrupted samples per block follows a hypergeometric distribution, which is upper-bounded by a corresponding binomial distribution with parameter \(p = C/s \le 0.1\). 
Applying the Chernoff bound shows that the probability of having at least half of a block corrupted decays exponentially in the block size. 
Since each block contains at least \(\ell_{\min} \ge b/2 = \Omega(\log s)\) samples, a union bound over all blocks yields a total failure probability of at most \(s^{-3}\). 

Next, in Lemma~\ref{lem:2}, we show that the calibrated mean-of-medians estimator remains close to the true expected value of each item, conditioned on the event \(\mathcal{E}_k\).
\begin{restatable}{lemma}{MainLemmaTwo}\label{lem:2}
For any item \(k\), if the ''majority–uncorrupted`` event \(\mathcal{E}_k\) holds, for any \(\delta\in(0,1)\):
\begin{align}\label{eq:lem_2}
\Pr\Bigg(
  |\hat\mu_k(s)-\mu_k|
  \le \frac{1}{q_b'(\xi)}
     \Bigg[ &
       \sqrt{\frac{2b  q_b(\mu_k)\bigl(1-q_b(\mu_k)\bigr)\log(2/\delta)}{s}}
       \notag\\
       & + \frac{2b \log(2/\delta)}{3s}
     \Bigg]
\Bigg)
\ge 1-\delta, \notag \\
&\quad \text{\emph{for some} }\xi\in(0,1).
\end{align}
\end{restatable}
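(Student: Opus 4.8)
The plan is to split $|\hat\mu_k(s)-\mu_k|$ into (i) a concentration bound for the raw mean-of-medians statistic $\overline M = \tfrac{1}{m}\sum_{j=1}^{m} M_j$ on the $q_b$-scale, and (ii) a deterministic transfer of that bound back to the $\mu$-scale through the calibration inverse $g_b = q_b^{-1}$. Throughout I condition on $\mathcal{E}_k$. The first thing to establish is the conditional law of the block medians: since $\mathcal{E}_k$ forces every block to contain a strict majority of uncorrupted samples, a block whose clean samples already form a decisive majority has its median fixed regardless of the corrupted minority, so each $M_j$ is an independent $\mathrm{Ber}(q_b(\mu_k))$ indicator, where $q_b(\mu_k)=\Pr[\mathrm{Bin}(b,\mu_k)\ge (b+1)/2]$ is precisely the majority map that \texttt{Calibrate} inverts. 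This is the step that turns a statement about corrupted binary samples into one about clean i.i.d.\ indicators.

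Given the $M_j$ as i.i.d.\ $\mathrm{Ber}(q_b(\mu_k))$ with variance $q_b(\mu_k)(1-q_b(\mu_k))$, I would apply Bernstein's inequality to their average over the $m$ blocks, obtaining with probability at least $1-\delta$
\[
|\overline M - q_b(\mu_k)| \le \sqrt{\frac{2\,q_b(\mu_k)\bigl(1-q_b(\mu_k)\bigr)\log(2/\delta)}{m}} + \frac{2\log(2/\delta)}{3m}.
\]
Because each block has size $\approx b$ and there are $m\approx s/b$ of them, substituting $1/m = b/s$ reproduces exactly the two bracketed terms in \eqref{eq:lem_2}, with the factor $b$ appearing inside the square root and in the linear tail term.

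For the transfer, note that $q_b$ is a polynomial that is strictly increasing on $(0,1)$ with $q_b'>0$ there, so its inverse $g_b$ is differentiable and the calibrated estimate is $\hat\mu_k = g_b(\overline M)$ while $\mu_k = g_b(q_b(\mu_k))$. The mean value theorem then gives $\hat\mu_k - \mu_k = g_b'(\eta)\,(\overline M - q_b(\mu_k))$ for some $\eta$ between $\overline M$ and $q_b(\mu_k)$; writing $g_b'(\eta)=1/q_b'(\xi)$ with $\xi = g_b(\eta)\in(0,1)$ yields the leading factor $1/q_b'(\xi)$ and, combined with the Bernstein bound, the claim. I expect the main obstacle to lie in the first step rather than the last two: one must handle the blocks whose clean samples do \emph{not} form a decisive majority, since there the corrupted minority can still flip the median even under $\mathcal{E}_k$, and show that conditioning on $\mathcal{E}_k$ (together with controlling the probability of such borderline blocks) lets the $M_j$ be treated as the clean indicators $\mathrm{Ber}(q_b(\mu_k))$. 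The Bernstein and mean-value-theorem steps are then routine, and since $\xi$ is only required to exist, no lower bound on the slope $q_b'(\xi)$ is needed at this stage.
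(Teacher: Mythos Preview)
Your proposal is essentially the paper's own proof: assert that under $\mathcal{E}_k$ the block medians are i.i.d.\ $\mathrm{Ber}(q_b(\mu_k))$, apply Bernstein in $q$-space with the substitution $1/m\le b/s$, and then pull the bound back to $\mu$-space via the mean value theorem on $g_b=q_b^{-1}$, writing $g_b'(\eta)=1/q_b'(\xi)$ with $\xi=g_b(\eta)$. The paper treats your ``main obstacle'' (the conditional law of the block medians under $\mathcal{E}_k$) as immediate and simply asserts $M_{j,k}\sim\mathrm{Ber}(q_b(\mu_k))$ without addressing the borderline-block issue you flag, so your sketch is at least as complete as the paper's argument.
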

Here, \(q_b(p)\) denotes the expected value of the median of \(b\) samples under event \(\mathcal{E}_k\), formally defined as
\[
q_b(p) \coloneq \Pr\big(\mathrm{Bin}(b,p)\ge (b{+}1)/2\big).
\]
To prove this, we use Bernstein's inequality to show that the empirical mean of these block medians concentrates around \(q_b(\mu_k)\) with high probability, yielding a deviation bound in terms of \(s\) and \(b\). 
Applying the mean value theorem to the inverse calibration function \(g_b = q_b^{-1}\) then transfers this concentration result from the transformed \(q\)-space back to the original mean space, introducing a scaling factor of \(1/q_b'(\xi)\). 
Combining these steps gives the final high-probability bound on \(|\hat{\mu}_k(s) - \mu_k|\) stated in~\ref{eq:lem_2}.
Then, in Lemma~\ref{lem:3}, we rewrite the estimation error bound, replacing the unknown variance with empirical data variance 
\begin{restatable}{lemma}{MainLemmaThree}\label{lem:3}
Fix a round \(t\) and an item \(k\) and let \(s=T_k(t-1)\) for item \(k\). Set \(a_t \coloneq \frac{\log t}{s}\) and \(x \coloneq |\hat\mu_k(s)-\mu_k|\). 
Under the majority–uncorrupted event \(\mathcal{E}_k\), there exist constants \(A_s\) and \(B_s\) such that
such that with probability at least \(1-t^{-4}\),
\begin{align}\label{eq:lem_3}
x \le A_s \sqrt{\hat v_k(s) a_t} + B_s a_t,
\qquad
\hat v_k(s)\coloneq \hat\mu_k(s)\bigl(1-\hat\mu_k(s)\bigr).
\end{align}
\end{restatable}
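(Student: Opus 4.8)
The plan is to start from the conditional concentration bound of Lemma~\ref{lem:2} and massage it into a self-referential inequality in $x = |\hat\mu_k(s)-\mu_k|$ whose solution is exactly the variance-adaptive bound claimed here. Throughout I would work on the majority--uncorrupted event $\mathcal{E}_k$ and treat $s=T_k(t-1)$, the block size $b$, and the mean-value point $\xi$ as fixed. The first move is to instantiate the free confidence parameter: choosing $\delta=t^{-4}$ in Lemma~\ref{lem:2} makes the conclusion hold with conditional probability at least $1-t^{-4}$ (matching the target) and turns both $\log(2/\delta)/s$ terms into multiples of $a_t=\log t/s$, since $\log(2/\delta)=\log 2+4\log t\le 5\log t$ for $t\ge 2$. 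After this substitution the bound becomes $x \le \frac{1}{q_b'(\xi)}\bigl[\sqrt{10\,b\,q_b(\mu_k)(1-q_b(\mu_k))\,a_t}+\tfrac{10}{3}\,b\,a_t\bigr]$.

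Next I would re-express the $q$-space variance factor $q_b(\mu_k)(1-q_b(\mu_k))$ in terms of the true Bernoulli variance $v(\mu_k)=\mu_k(1-\mu_k)$. Folding the block-size factor $b$ and the inverse slope $1/q_b'(\xi)$ into coefficients gives a bound of the form $x \le A\sqrt{v(\mu_k)\,a_t}+B\,a_t$, where $A,B$ depend only on $b$, $\mu_k$, and $\xi$ (hence on $s$). The delicate point is controlling $1/q_b'(\xi)$: because $\xi$ arises from the mean-value theorem applied to $g_b=q_b^{-1}$, it lies between $\hat\mu_k(s)$ and $\mu_k$, so I would invoke the closeness of these two (again via Lemma~\ref{lem:2}) to confine $\xi$ to a neighborhood of $\mu_k$ and lower-bound $q_b'$ there. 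Establishing that the calibrated block-median variance is compatible with the Bernoulli variance scaling up to these coefficients is the substantive step, and I expect it to be the main obstacle.

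The heart of the lemma is then the replacement of the unknown true variance by the empirical proxy. Since $v(p)=p(1-p)$ is $1$-Lipschitz on $[0,1]$, I have $v(\mu_k)\le \hat v_k(s)+|\mu_k-\hat\mu_k(s)| = \hat v_k(s)+x$. Substituting this into the previous display yields the self-bounding inequality $x \le A\sqrt{(\hat v_k(s)+x)\,a_t}+B\,a_t$, which now mixes the desired empirical variance with the unknown quantity $x$ itself.

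Finally I would solve this inequality for $x$. Applying $\sqrt{u+w}\le\sqrt{u}+\sqrt{w}$ splits off a term $A\sqrt{x\,a_t}$, which I absorb by AM--GM using $A\sqrt{x\,a_t}\le \tfrac12 x+\tfrac12 A^2 a_t$. Rearranging isolates $x$ and, after doubling, gives $x \le 2A\sqrt{\hat v_k(s)\,a_t}+(A^2+2B)\,a_t$. Setting $A_s=2A$ and $B_s=A^2+2B$ produces the claimed form, with the conditional probability $1-t^{-4}$ inherited from the choice of $\delta$. This inversion step is routine; the weight of the argument rests on the slope and variance-compatibility control in the second paragraph.
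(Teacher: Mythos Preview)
Your approach is essentially the paper's: instantiate Lemma~\ref{lem:2}, pass from $q$-space variance to $\mu$-space variance, use the $1$-Lipschitz self-bounding property $v(\mu_k)\le \hat v_k(s)+x$, and solve the resulting self-referential inequality. Two points are simpler than you anticipate.

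First, the variance conversion you flag as the ``main obstacle'' is a one-line fact: for odd $b$, the majority map $q_b$ pushes values away from $1/2$, so $q_b(\mu_k)(1-q_b(\mu_k))\le \mu_k(1-\mu_k)$ directly, with no extra constant. Second, there is no need to control $1/q_b'(\xi)$ at all in this lemma: the statement only asks for constants $A_s,B_s$ indexed by $s$, and the paper simply absorbs $1/q_b'(\xi_s)$ into them (the sup over $s$ is taken later, outside Lemma~\ref{lem:3}). Your proposed localization of $\xi$ via Lemma~\ref{lem:2} is unnecessary here.

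The final algebraic step differs cosmetically: the paper rewrites $x-\alpha_0\sqrt{a_t x}\le \alpha_0\sqrt{\hat v_k(s)a_t}+\beta_0 a_t$ as a quadratic in $\sqrt{x}$ and bounds the root, whereas you use AM--GM on $A\sqrt{x a_t}$. Both routes give the same structure $x\le 2\alpha_0\sqrt{\hat v_k(s)a_t}+(c\alpha_0^2+2\beta_0)a_t$ up to the numerical constant $c$.
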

To prove Lemma~\ref{lem:3}, we use the fact that the Bernoulli variance map \(v(p)=p(1-p)\) is \(1\)-Lipschitz on \([0,1]\), i.e., \(|u(1-u)-v(1-v)|\le |u-v|\). Substituting \(u=\mu_k\), \(v=\hat\mu_k(s)\), and noting \(x=|\hat\mu_k(s)-\mu_k|\), we obtain \(\mu_k(1-\mu_k)\le \hat v_k(s)+x\), which replaces the unknown variance by an empirical term in the variance-aware bound.

Combining \eqref{eq:lem_3} with the UCB radius in Line~\ref{line:var_radius}, we obtain a high–probability \emph{sufficient} condition under which the UCB index of a suboptimal item \(k\) falls below that of an optimal item \(k^*\):
\[
A\sqrt{\tfrac{\hat v_k(s) \log t}{s}}  +  B\tfrac{\log t}{s}
 \le  \tfrac{1}{2} \Delta_{k,k^*}.
\]
This yields the sample–size threshold
\[
s  \ge  \frac{16 A^{2} \hat v_k(s) \log t}{\Delta_{k,k^*}^{2}}
 +  \frac{4 B \log t}{\Delta_{k,k^*}}.
\]
Using this threshold, we bound the number of rounds in which a suboptimal item \(k\) can be mistaken for \(k^*\); applying Lemma~1 of \citet{kveton2015cascading} then gives the stochastic regret term \(O \left(\sum_{k\notin S^*}\frac{\log T}{\Delta_k}\right)\). 
Finally, since each round reveals at least one item’s feedback, within at most \(10KC\) rounds every item accumulates \(10C\) observations. 
Combining these parts yields the final bound stated in Theorem~\ref{thm:median_ucb_regret}. \qed

The full proof of Theorem~\ref{thm:median_ucb_regret} is provided in the Appendix~\ref{sec:appendix_A} and~\ref{sec:appendix_B}.

\begin{theorem}[Regret of \MSUCB{}]
\label{thm:msucb_regret}
Let \(\Delta\) denote the minimum gap between any suboptimal item and any optimal item. Then \MSUCB{} satisfies
\[
\mathrm{Reg}(T)=O \left(KC+\frac{K\log T}{\Delta}\right).
\]
\end{theorem}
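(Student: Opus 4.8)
The plan is to reduce Theorem~\ref{thm:msucb_regret} to the known--corruption guarantee of Theorem~\ref{thm:median_ucb_regret} through the model--selection wrapper, paying only a polylogarithmic overhead for not knowing \(C\). First I would loosen the bound of Theorem~\ref{thm:median_ucb_regret} to the target form: since every suboptimal gap obeys \(\Delta_k\ge\Delta\) and there are at most \(K\) suboptimal items, \(\sum_{k\notin S^*}\log T/\Delta_k\le K\log T/\Delta\), so \MUCB{} supplied with the \emph{true} level \(C\) already attains \(O(KC+K\log T/\Delta)\). It therefore suffices to show that \MSUCB{} matches the best correctly--specified base learner without knowing \(C\).

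Next I would set up the grid \(\mathcal{G}=\{0,1,2,4,\dots\}\) up to \(T\), yielding \(M=O(\log T)\) base instances \(\{\MUCB(C_i)\}_{C_i\in\mathcal{G}}\), and assign each the putative regret bound \(R_i(T)=\Theta\!\left(KC_i+K\log T/\Delta\right)\). The key structural claim is that every \emph{overestimating} instance \(C_i\ge C\) honors its putative bound with high probability. This follows from Theorem~\ref{thm:median_ucb_regret} with essentially no change: enlarging the warmup to \(10C_i\) observations per item raises the exploration cost only to \(O(KC_i)\), already budgeted in \(R_i(T)\), while the majority--uncorrupted event of Lemma~\ref{lem:1} still holds because \(s\ge 10C_i\ge 10C\), so the concentration guarantees of Lemmas~\ref{lem:2} and~\ref{lem:3} carry over verbatim. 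Letting \(C^\star=\min\{C_i\in\mathcal{G}:C_i\ge C\}\), the doubling structure gives \(C^\star\le 2C\), hence \(R_{C^\star}(T)=O(KC+K\log T/\Delta)\).

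Finally I would invoke the model--selection guarantee of \citet{wei2022model}. Its master runs all instances on the shared stream, activates one per round, and periodically eliminates any instance whose empirical performance provably trails the current leader; the surviving best correctly--specified learner then anchors the master's regret up to factors polylogarithmic in \(M\) and \(T\). Underestimating instances (\(C_i<C\)) are over--optimistic, so their Theorem~\ref{thm:median_ucb_regret} guarantee fails, but the elimination test detects the resulting gap and discards them, while \(\MUCB(C^\star)\) survives. Because \(M=O(\log T)\), the overhead is absorbed and one obtains \(\Reg(T)=O(KC+K\log T/\Delta)\).

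I expect the main obstacle to lie not in any new concentration bound but in verifying the interface conditions that the framework of \citet{wei2022model} imposes on the base learners. Concretely, one must check that each \(\MUCB(C_i)\) furnishes valid high--probability performance certificates even though the master activates it only on a sparse, adaptively chosen subset of rounds, and that the additive corruption structure of our regret is compatible with the balancing and elimination thresholds, which are themselves calibrated to tolerate unknown corruption. The most delicate point is controlling the regret of underspecified instances before elimination: here I would lean on the corruption--robustness already built into the Wei et al.\ master, whose gap tests stay valid under adversarial feedback, so that the total regret---including the pre--elimination contribution of every \(C_i<C\) instance---is bounded by \(\tilde O(R_{C^\star}(T))=\tilde O(KC+K\log T/\Delta)\), matching the claim once the \(\mathrm{polylog}\) factors are absorbed.
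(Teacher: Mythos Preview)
Your proposal is correct and follows essentially the same approach as the paper: both reduce to Theorem~\ref{thm:median_ucb_regret} and then invoke the model--selection framework of \citet{wei2022model}. The paper's proof is more concise, directly citing Theorem~4 of \citet{wei2022model} in the parametric form \(O(\beta_1/\Delta+\beta_2 C)\mapsto O(\beta_4/\Delta+\beta_2 C)\) with \(\beta_4=10^4(2\beta_1+42\beta_2\log T)\) and plugging in \(\beta_1=K\log T\), \(\beta_2=K\), whereas you sketch the grid--and--elimination mechanics that underlie that black box.
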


\begin{proof}
By Theorem~4 in \cite{wei2022model}, if \texttt{G\textendash COBE} is run with a base learner whose regret is
\(O \left(\frac{\beta_1}{\Delta}+\beta_2 C\right)\),
then the model selection procedure achieves regret
\(O \left(\frac{\beta_4}{\Delta}+\beta_2 C\right)\),
where \(\beta_4=10^{4} \left(2\beta_1+42\beta_2\log T\right)\).
By Theorem~\ref{thm:median_ucb_regret}, our base learner yields \(\beta_1=K\log T\) and \(\beta_2=K\).
Substituting these values and absorbing constants gives
\[
\mathrm{Reg}(T)=O \left(KC+\frac{K\log T}{\Delta}\right).
\]
which proves the claim.
\end{proof}

\paragraph{Discussion} We compare out algorithm \MSUCB{} with the lower bound and other baselines in the following two settings.

(i) \textbf{Uncorrupted setting.}  
The gap–dependent lower bound for cascading bandits problem was established by \citet{kveton2015cascading} as \(O \left( O\left(\sum_{k \notin S^*}\frac{\log T}{\Delta_{k}} \right)\right)\), and the gap–independent lower bound was given by \citet{vial2022minimax} as \(O \left( \sqrt{KT}\right)\). 
In the absence of corruption, we prove that our algorithm attains the gap–dependent lower bound. The gap–independent guarantee follows by adapting the proof of Theorem~2 in \cite{vial2022minimax} and substituting our mean estimator for the original one. In contrast, both \texttt{CascadeRAC}~\cite{xie2025cascading} and \texttt{FORC}~\cite{Golrezaei2021Learning} are suboptimal by a factor of \(d\).  
We further extend \CBARBAR{}~\cite{xu2021simple} from linear combinatorial rewards to the cascading setting, showing that the resulting \CBARBARNL{} algorithm is suboptimal by a factor of \(d^2\). 
Moreover, all aforementioned algorithms incur an additional \(\log T\) factor.  
The pseudocode, regret bounds, and proof sketch for \CBARBARNL{} are provided in Appendix~\ref{sec:appendix_C}.  
Overall, \MSUCB{} outperforms all existing robust cascading bandit algorithms in the stochastic regime.

(ii) \textbf{Corrupted setting.}  
To the best of our knowledge, no prior work establishes a lower bound for cascading bandits under corruption. In the linear combinatorial bandit setting, the corruption lower bound is \(O(C)\). 
Since cascade feedback provides strictly less information than semi–bandit feedback, the corresponding lower bound for cascading bandits must be at least \(O(C)\).  
Under corruption level \(C\), both \texttt{CascadeRAC} and \texttt{FORC} suffer multiplicative corruption terms in their regret. 
Our extension, \CBARBARNL{}, achieves an additive \(O(dC)\) corruption term, and our proposed algorithm similarly maintains a linear additive corruption term. Although this remains suboptimal by a factor of \(K\) compared to the ideal \(O(C)\) rate, our experiments show that \MSUCB{} consistently outperforms \CBARBARNL{} by a substantial margin in practice.

\section{Empirical Evaluation}
\label{sec:NumRes}
\begin{figure*}[t]
  \centering
\begin{subfigure}[t]{0.3\linewidth}
     \centering    \includegraphics[width=\linewidth]{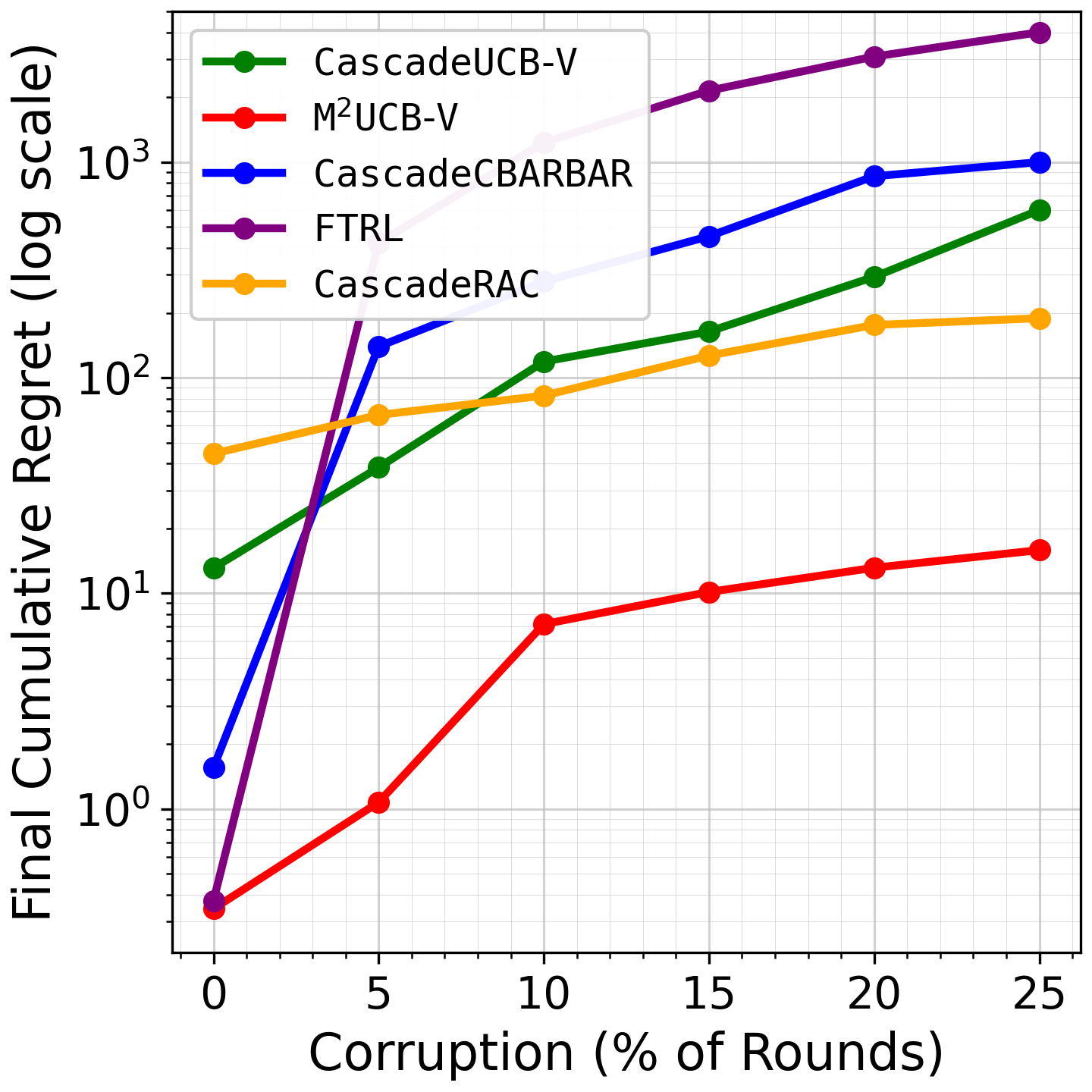}
   \caption{\small Yelp Dataset}
     \label{fig:d2}    \end{subfigure}\hfill
 \begin{subfigure}[t]{0.3\linewidth}
     \centering  \includegraphics[width=\linewidth]{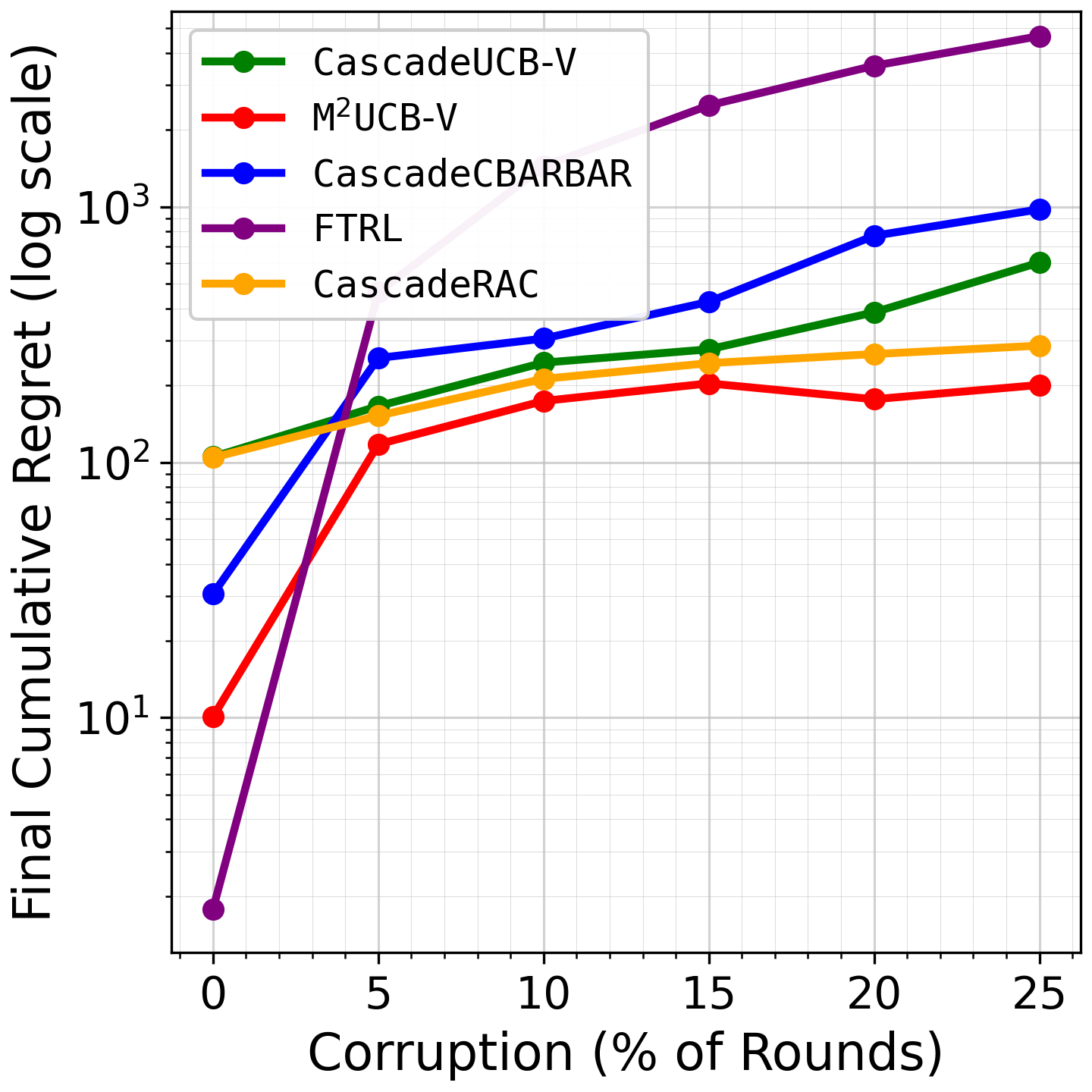}    \caption{\small MovieLens Dataset}
     \label{MovieLens} \end{subfigure}\hfill
\begin{subfigure}[t]{0.3\linewidth}
     \centering  \includegraphics[width=\linewidth]{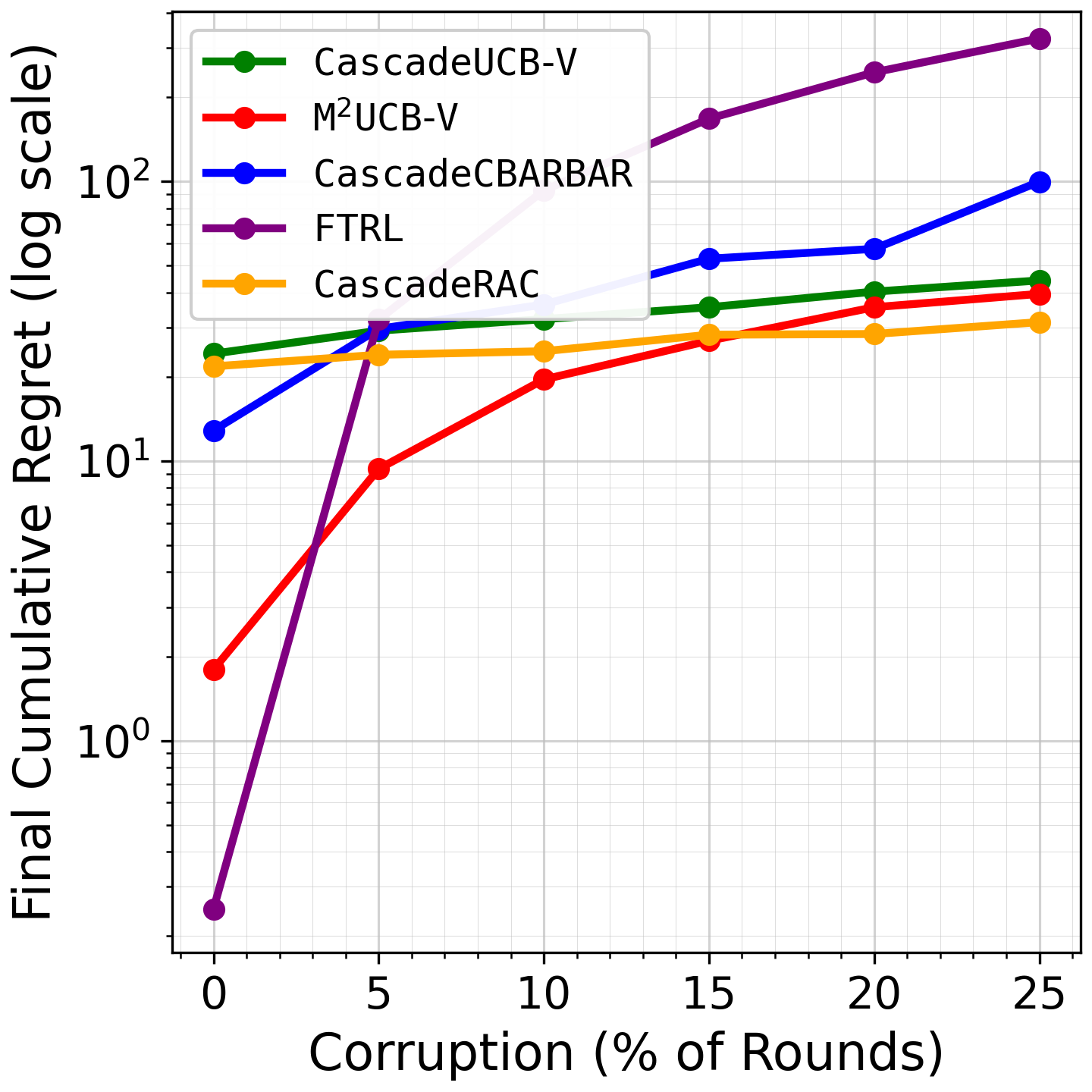}
  \caption{\small LastFM Dataset}%
     \label{fig:LastFM}
   \end{subfigure}
   \vspace{-2mm}
\caption{Comparing final cumulative regret of the algorithms after 40K rounds with list size $d=10$.}   \label{fig:CumulativeRegret}
\end{figure*}

\begin{figure*}[t]
  \centering
\begin{subfigure}[t]{0.245\linewidth}\label{fig:2_5corr}
     \centering    \includegraphics[width=\linewidth]{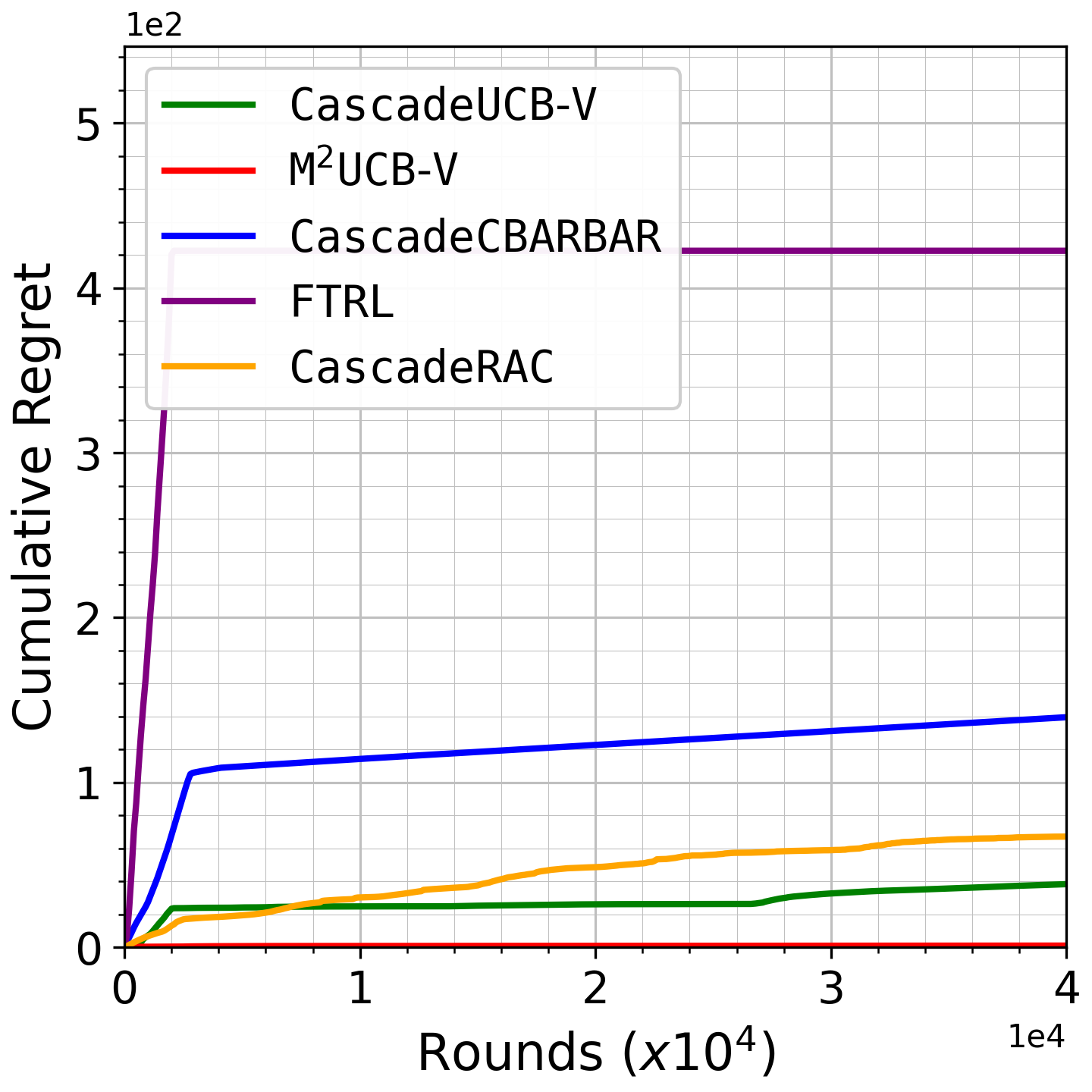}
   \caption{\small Corruption = 5\%}
 \end{subfigure}\hfill
 \begin{subfigure}[t]{0.245\linewidth}\label{fig:2_10corr}
     \centering  \includegraphics[width=\linewidth]{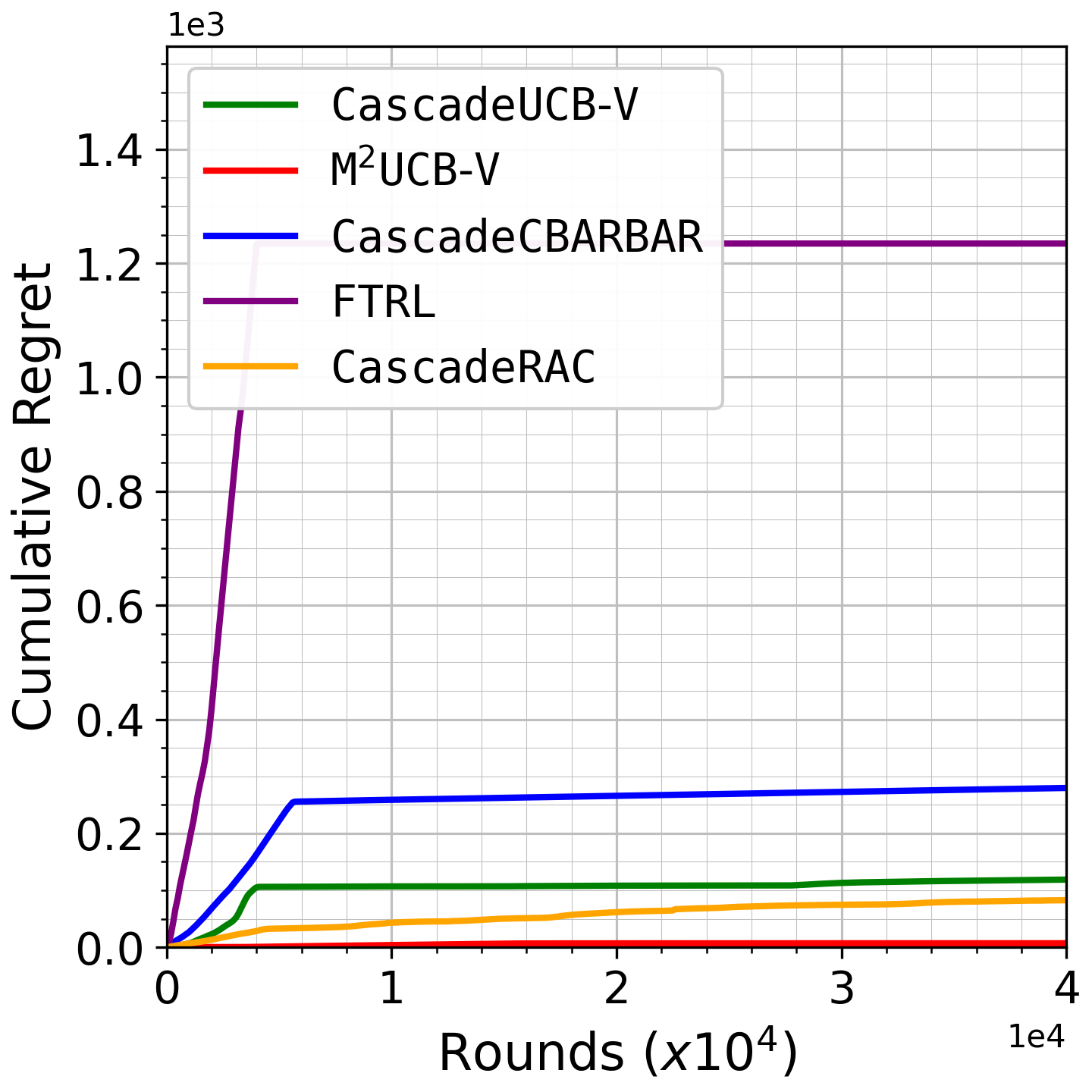}    \caption{\small Corruption=10\%}
  \end{subfigure}\hfill
\begin{subfigure}[t]{0.245\linewidth}\label{fig:2_15corr}
     \centering  \includegraphics[width=\linewidth]{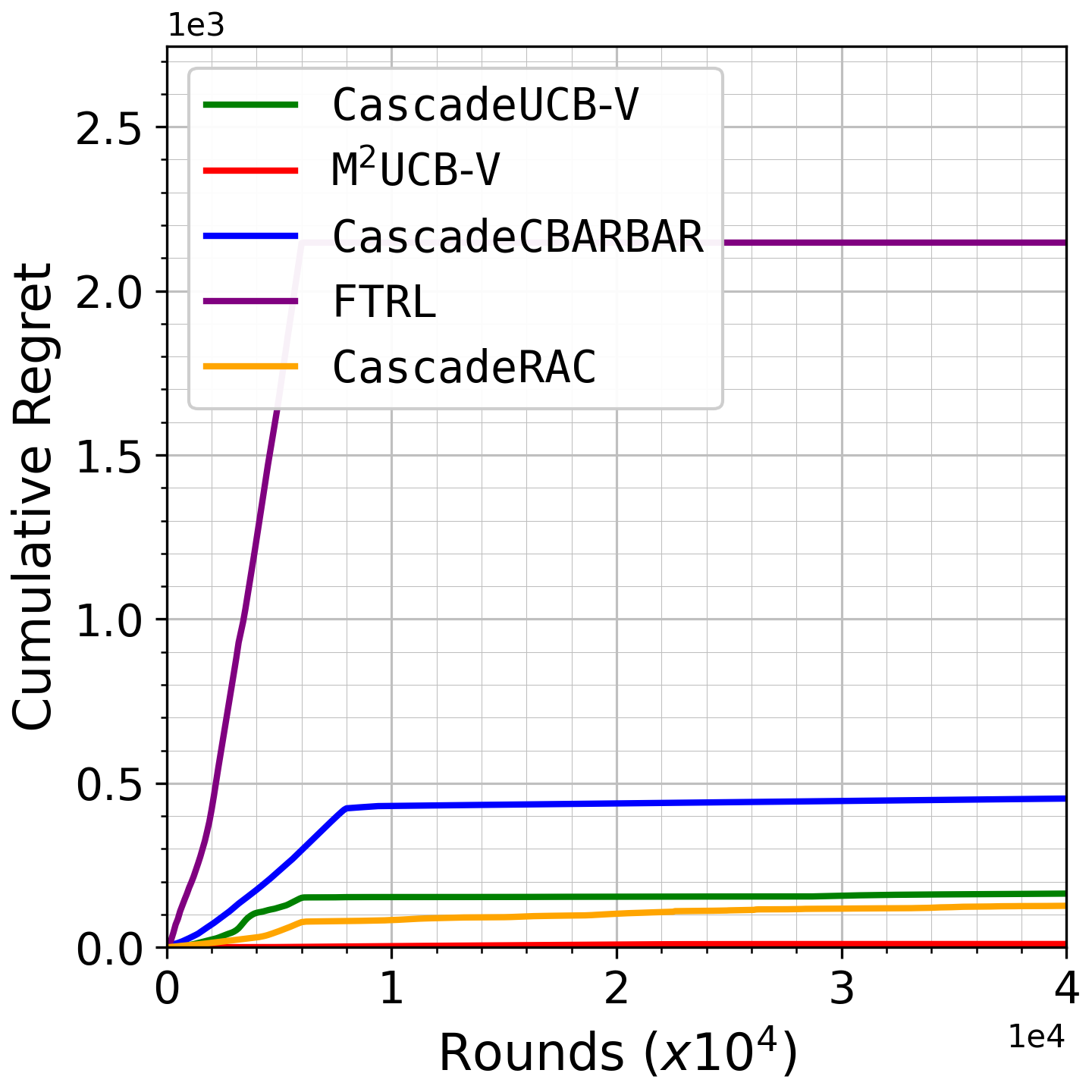}
  \caption{\small Corruption=15\%}
   \end{subfigure}
   \begin{subfigure}[t]{0.245\linewidth}\label{fig:2_20corr}
     \centering  \includegraphics[width=\linewidth]{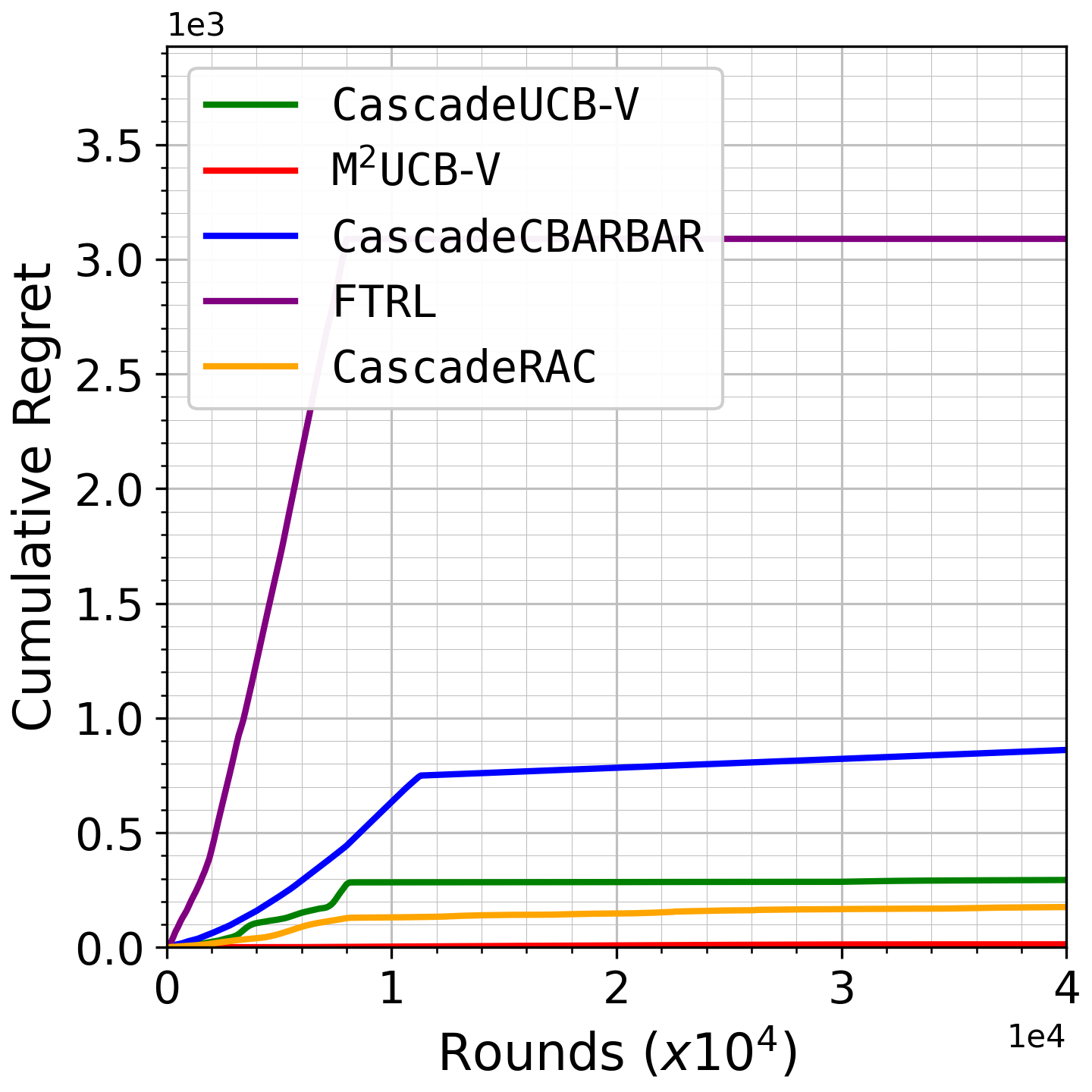}
  \caption{\small Corruption= 20\%}
   \end{subfigure}
   \vspace{-2mm}
\caption{Growth of cumulative regret as a function of rounds for the Yelp dataset, with list size $d=10$.}   \label{fig:CumulativeRegretRoundsY}
\end{figure*}

\begin{figure*}[t]
  \centering
\begin{subfigure}[t]{0.245\linewidth}\label{fig:2_5corr}
     \centering    \includegraphics[width=\linewidth]{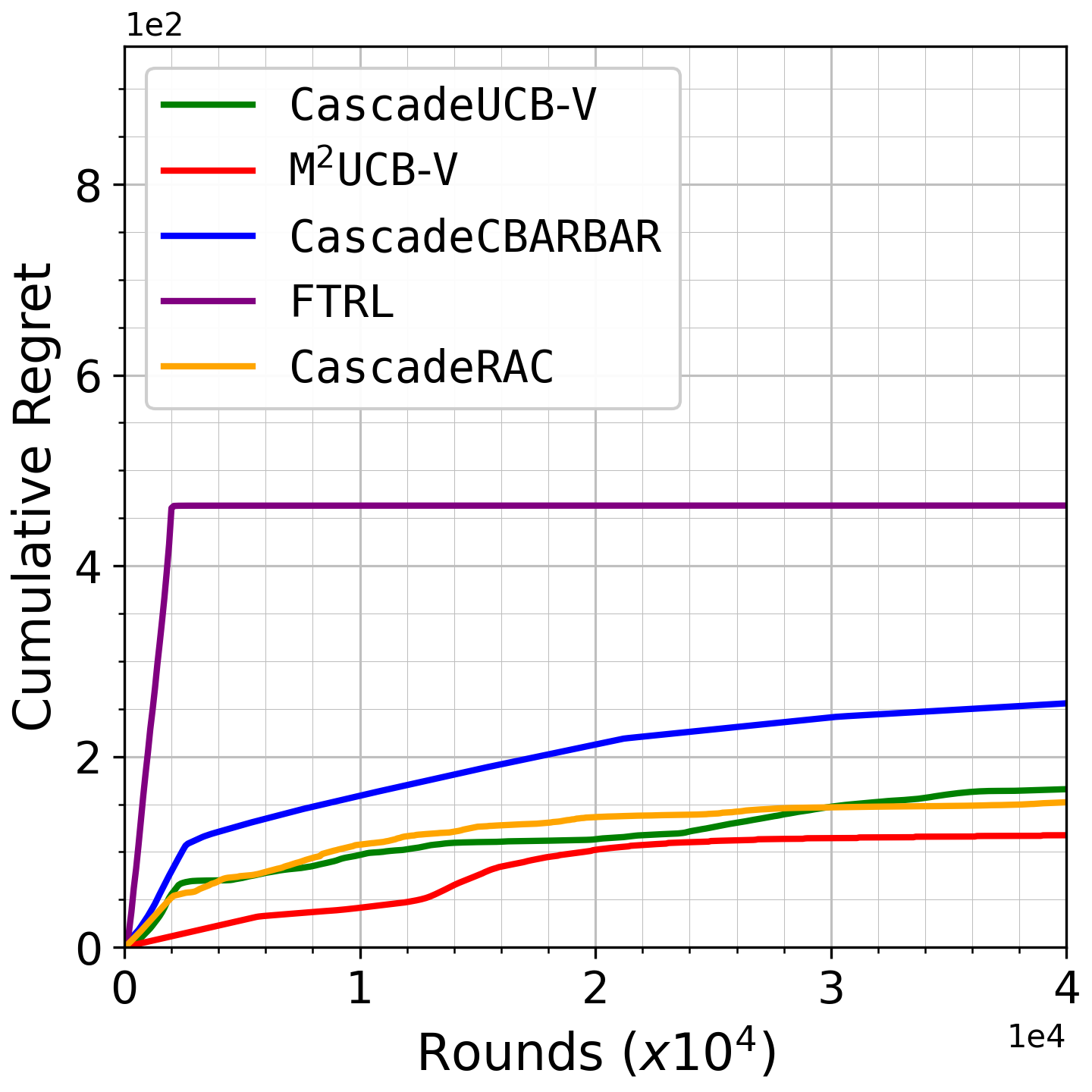}
   \caption{\small Corruption = 5\%}
 \end{subfigure}\hfill
 \begin{subfigure}[t]{0.245\linewidth}\label{fig:2_10corr}
     \centering  \includegraphics[width=\linewidth]{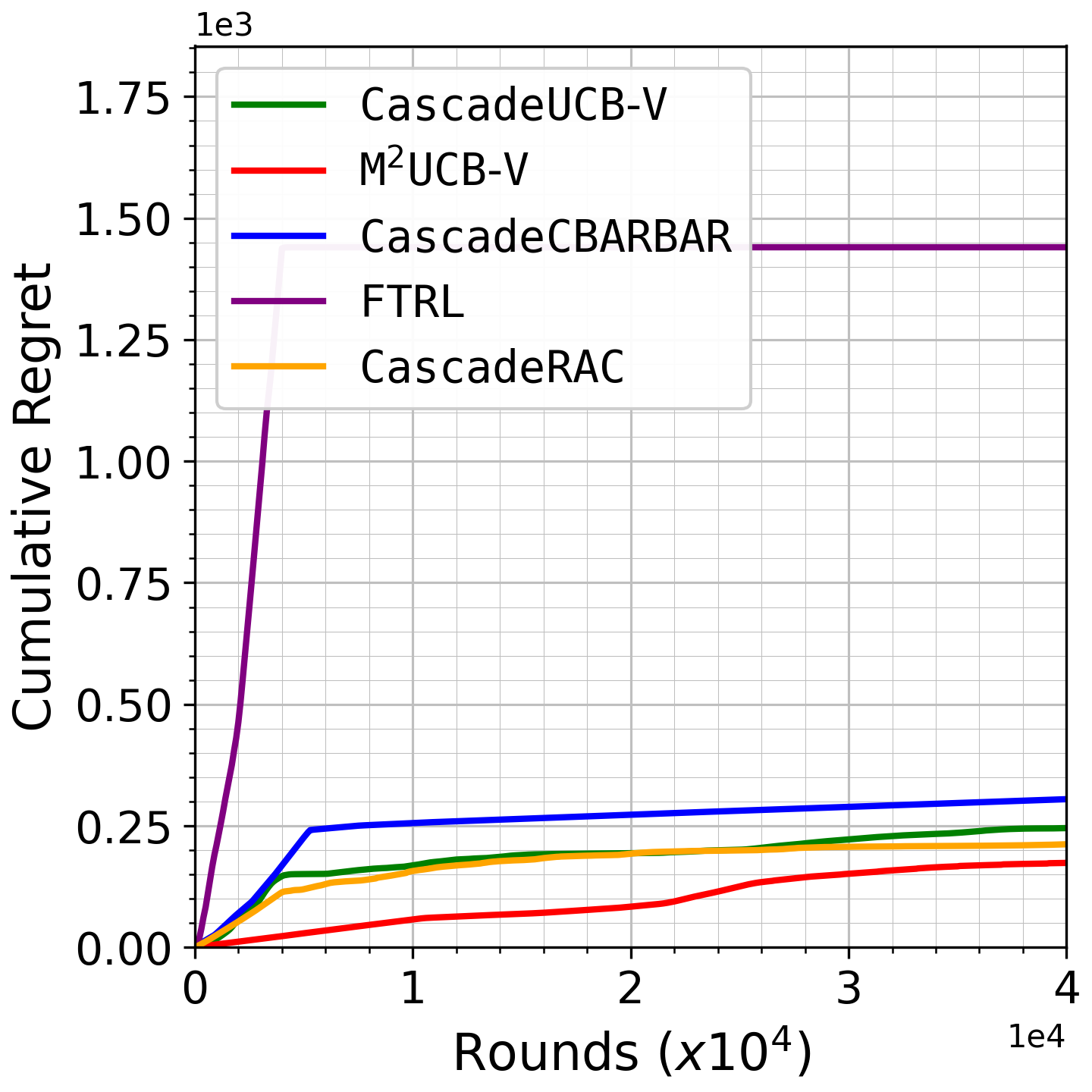}    \caption{\small Corruption=10\%}
  \end{subfigure}\hfill
\begin{subfigure}[t]{0.245\linewidth}\label{fig:2_15corr}
     \centering  \includegraphics[width=\linewidth]{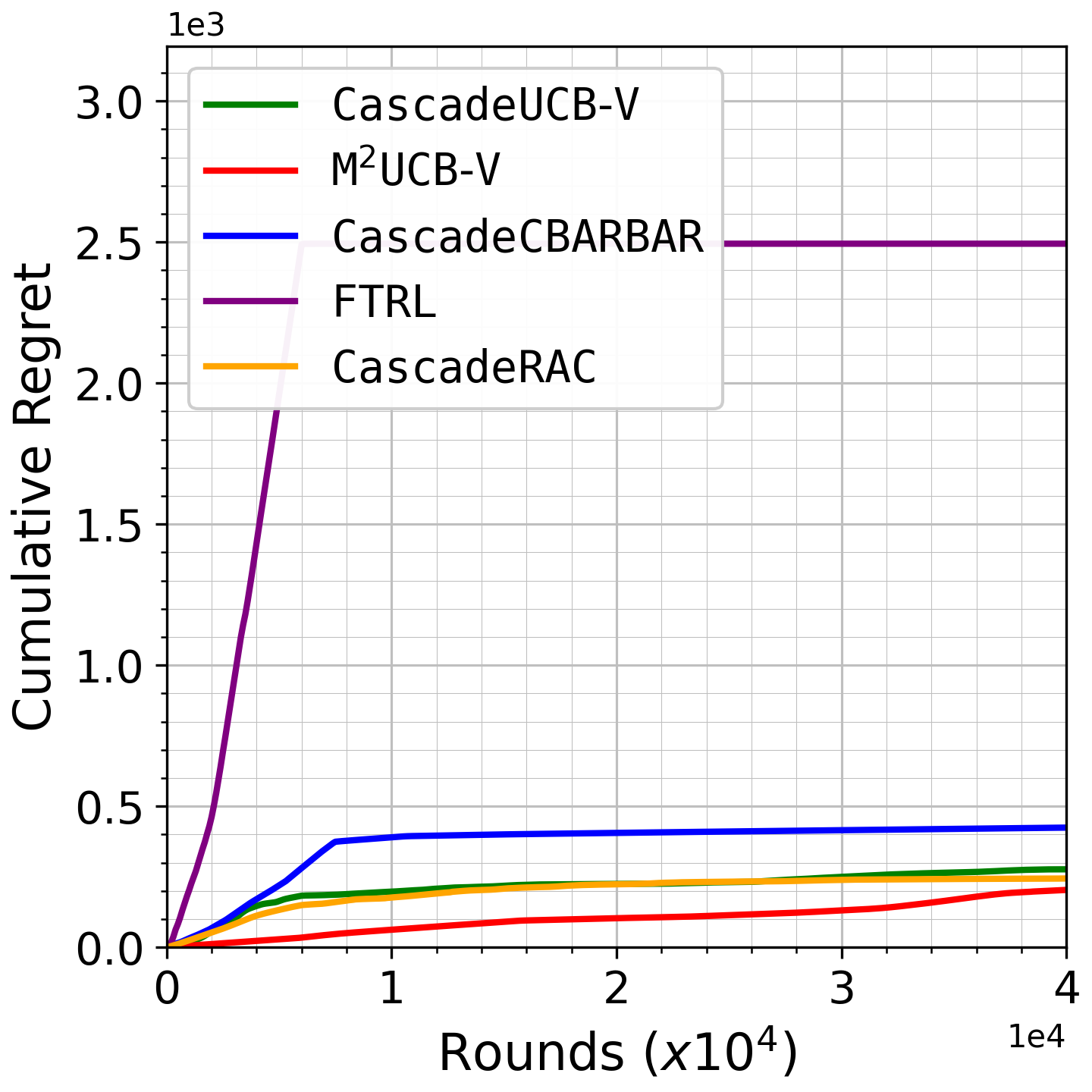}
  \caption{\small Corruption=15\%}
   \end{subfigure}
   \begin{subfigure}[t]{0.245\linewidth}\label{fig:2_20corr}
     \centering  \includegraphics[width=\linewidth]{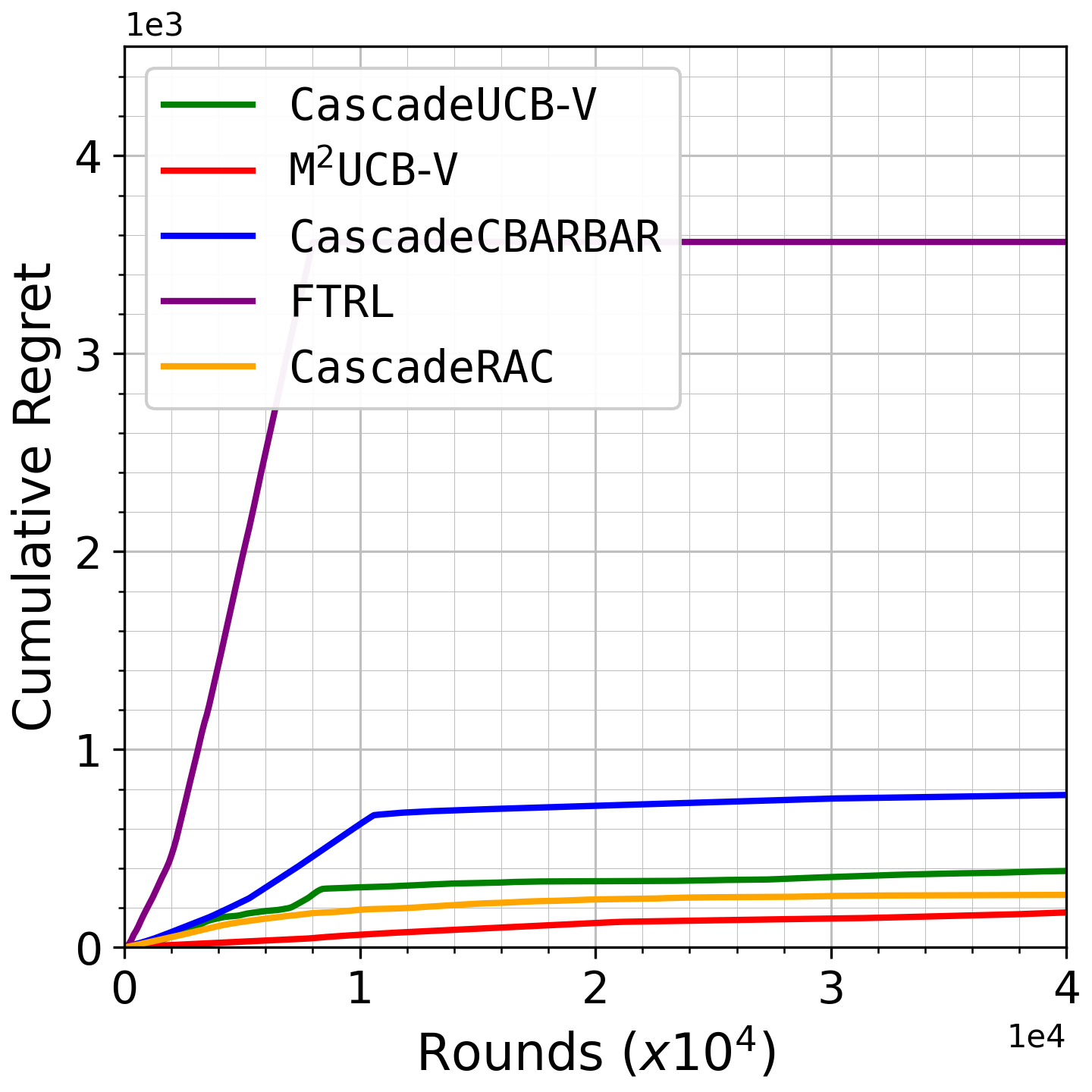}
  \caption{\small Corruption= 20\%}
   \end{subfigure}
   \vspace{-2mm}
\caption{Growth of cumulative regret as a function of rounds for the MovieLens dataset, with list size $d=10$.}   \label{fig:CumulativeRegretRoundsML}
\end{figure*}

\begin{figure*}[t]
  \centering
\begin{subfigure}[t]{0.245\linewidth}\label{fig:2_5corr}
     \centering    \includegraphics[width=\linewidth]{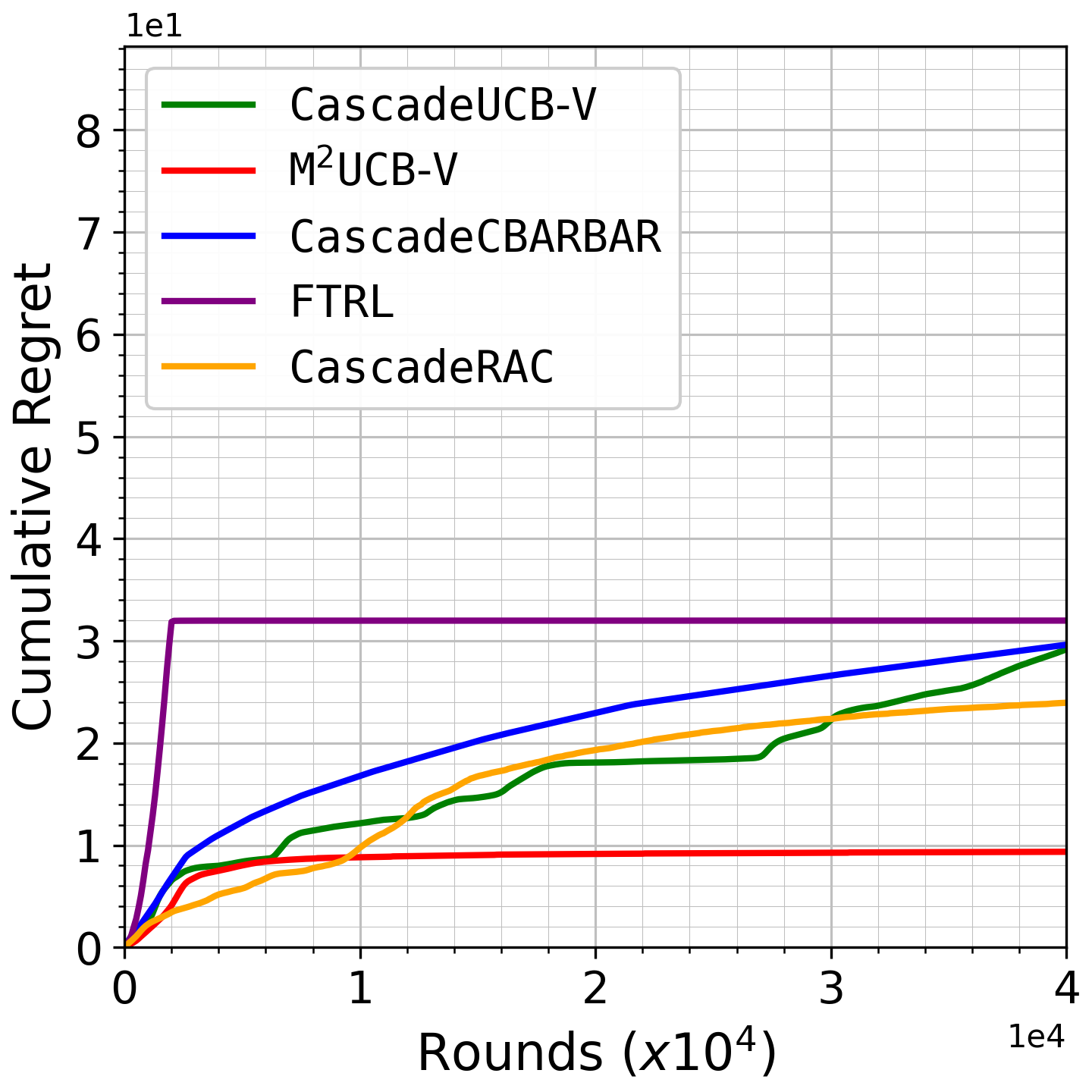}
   \caption{\small Corruption = 5\%}
 \end{subfigure}\hfill
 \begin{subfigure}[t]{0.245\linewidth}\label{fig:2_10corr}
     \centering  \includegraphics[width=\linewidth]{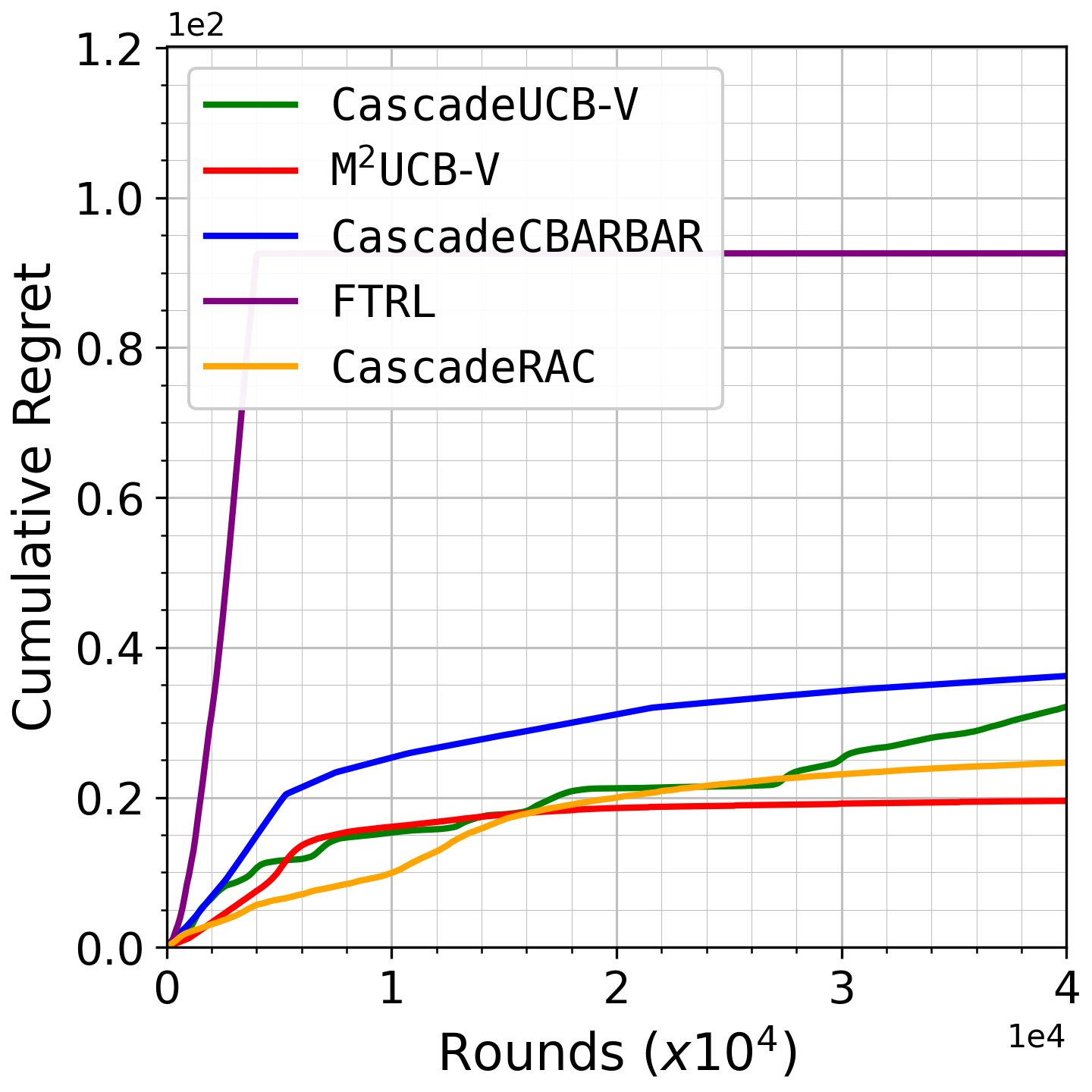}    \caption{\small Corruption=10\%}
  \end{subfigure}\hfill
\begin{subfigure}[t]{0.245\linewidth}\label{fig:2_15corr}
     \centering  \includegraphics[width=\linewidth]{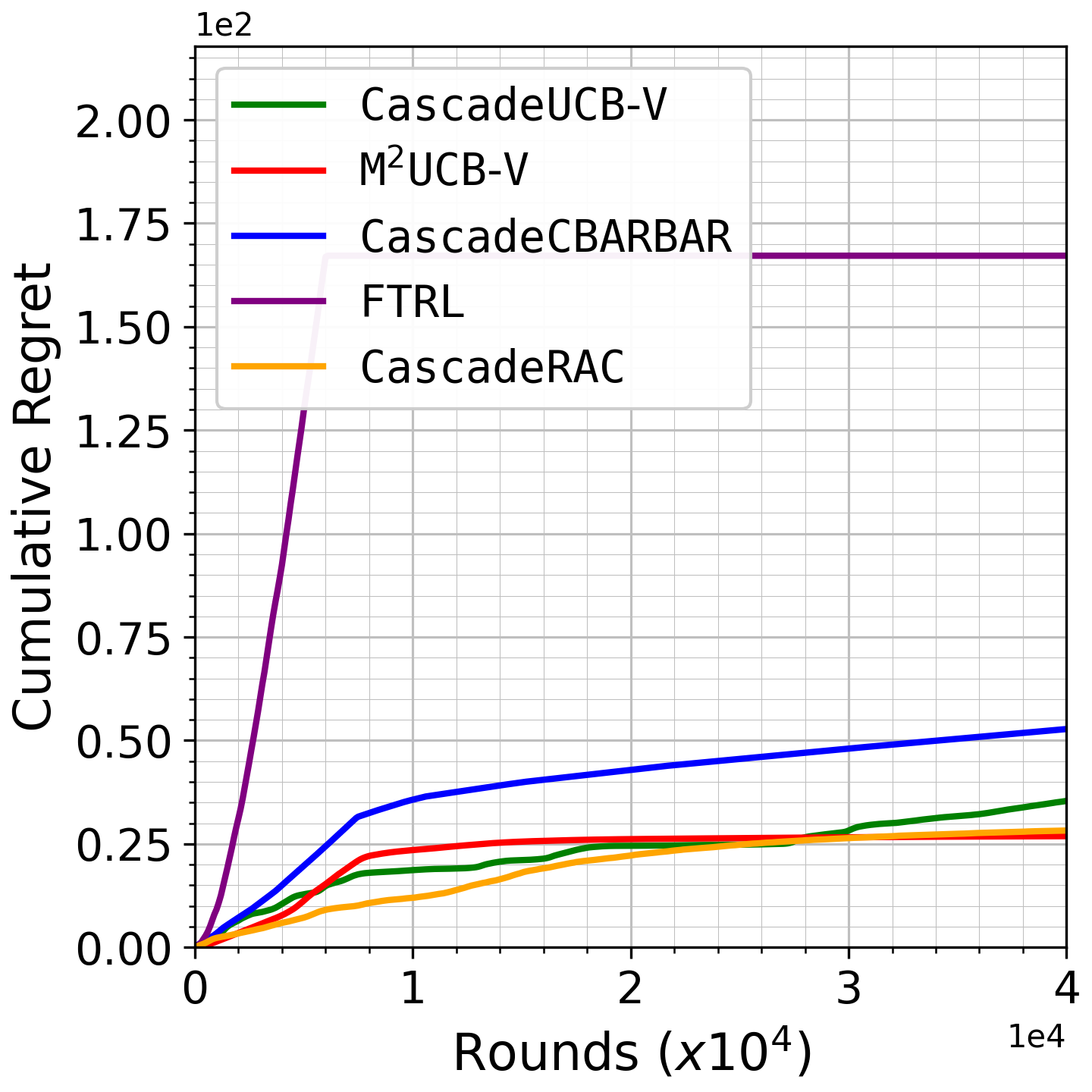}
  \caption{\small Corruption=15\%}
   \end{subfigure}
   \begin{subfigure}[t]{0.245\linewidth}\label{fig:2_20corr}
     \centering  \includegraphics[width=\linewidth]{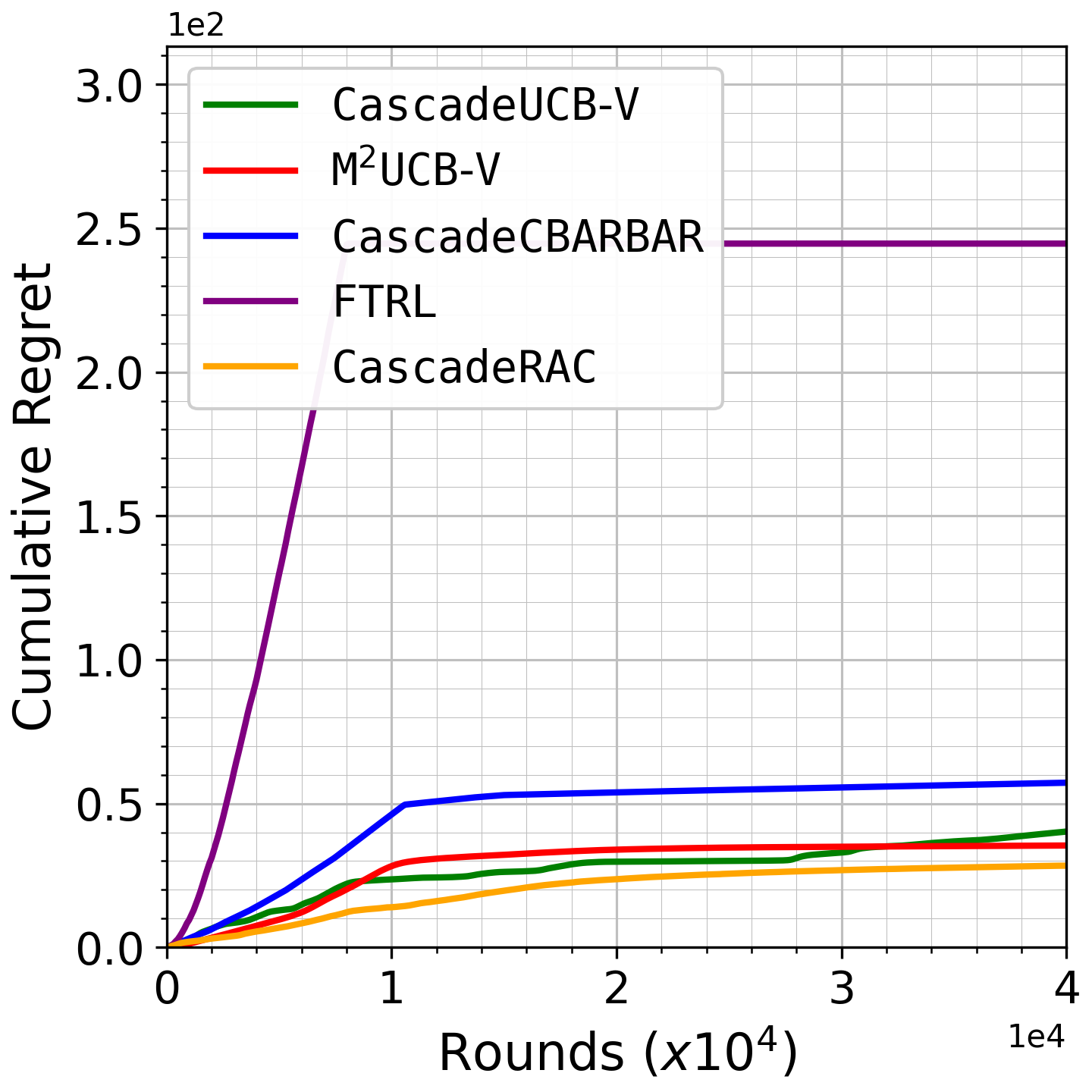}
  \caption{\small Corruption= 20\%}
   \end{subfigure}
   \vspace{-2mm}
\caption{Growth of cumulative regret as a function of rounds for the LastFM dataset, with list size $d=10$.}   \label{fig:CumulativeRegretRoundsLFM}
\end{figure*}

We validate our approach through extensive experiments on the following three large-scale real-world datasets, which together represent some of the most common applications of \OLTR{}. 

\paragraph{Datasets} The Yelp dataset~\cite{yelp_open_dataset} consists of 6,990,280 ratings  of 150,346 businesses from 1,987,897 users.
The MovieLens dataset~\cite{Harper2015MovieLens} is derived from a movie recommendations website and consists of 32,000,204 ratings of 87,585 movies from 200,948 users. 
Finally, the LastFM dataset~\cite{Schedl2016LFM} is derived from an online music site where users can tag artists and consists of 359,347 users and 186,642 artists. 

\paragraph{Baseline algorithms} We compare \MSUCB{} against strong baselines: \CUCBV{}~\cite{vial2022minimax}, which is optimal in the clean stochastic setting but not robust; \FTRL{}~\cite{Ito2021Advances}, a best-of-both-worlds combinatorial bandit method; \CRAC{}~\cite{xie2025cascading}, a corruption-robust cascading algorithm whose uncorrupted regret is suboptimal and whose dependence on corruption is multiplicative; and \CBARBARNL{}, our extension of \CBARBAR{}~\cite{xu2021simple}, which achieves an additive corruption term but remains suboptimal in the uncorrupted regime. We omit \texttt{FORC}~\cite{Golrezaei2021Learning} from experiments because its methodology and regret guarantees are similar to \CRAC{} and slightly weaker. A head-to-head theoretical comparison of all baselines appears in Table~\ref{tab:algo_comparison}. 

\paragraph{Bandit Arms} To test our algorithms, we define the bandit arms for each of the three application domains as follows. For Yelp, we ran the query ``pizza in Philadelphia'' which produced 529 restaurants, each of which was a bandit arm in our simulation. For MovieLens and LastFM, we chose a random set of 500 movies and artists, respectively, each with at least 100 ratings from users. 

\paragraph{Computing click probabilities of each arm from user ratings} 
For each arm corresponding to an item in the dataset, we compute the average rating and the number of ratings of the item. 
We then compute the click probability of the arm using standard techniques used in recommender systems like IMDB \cite{imdb_weighted_rating} as follows. First, we use the Bayesian average technique \cite{masurel_bayesian_average_star_ratings, yang2013combining} to compute the rating of each arm to account for the fact that the true rating is closer to the average value when more ratings are averaged. We then convert the Bayesian-averaged rating of each arm to a click probability using a sigmoid function, a technique that is commonly used in recommender systems \cite{cheng_wide_deep_2016}.

\paragraph{Corruption model} Given a corruption budget, the adversary selects \(C\) rounds to corrupt and flips the observed click outcomes until the budget is exhausted. 
As per the definition in~\eqref{eq:c}, any number of item-wise reward flips within a single round consumes only one unit of corruption. 
Consequently, to maximize impact under a fixed budget, we invert all observed rewards in corrupted rounds. 
We place these corrupted rounds at the very beginning of the horizon to disrupt exploration and induce prolonged exploitation of suboptimal items. 
We vary the corruption rate from \(0\%\) to \(25\%\), which is higher than typical rates reported in practice. 
For context, an analysis of Yelp found that approximately \(16\%\) of reviews are flagged as fake by the platform's filter \cite{luca2016fake}, and Amazon's 2023 transparency report indicates roughly \(250\) million fake ratings out of an estimated \(2.5\) billion total, about \(10\%\) \cite{amazon_transparency_report_2023}.


\paragraph{Final cumulative regret for different corruption percentages.}
Figure~\ref{fig:CumulativeRegret} reports the final cumulative regret after \(40\mathrm{K}\) rounds across corruption levels. 
In the no-corruption case, \MSUCB{} is among the top performers, consistent with its optimal stochastic regret. 
As corruption increases, \MSUCB{} degrades most gracefully: it outperforms all baselines on Yelp and MovieLens for \(5\%\)–\(25\%\) corruption, and on LastFM for \(5\%\)–\(15\%\). 
At very high corruption on LastFM (\(\ge 20\%\)), \CRAC{} edges out \MSUCB{} by a small margin.
By contrast, \CRAC{} performs poorly in purely stochastic settings, where it is consistently the worst baseline, suggesting a systematic overestimation of corruption. 
\CUCBV{} unexpectedly outperforms \CBARBARNL{} in several settings, likely because \CBARBARNL{}'s epoch-based schedule delays updates and prolongs exploration, hurting early performance.
\FTRL{} excels with no corruption, quickly finding the optimal list, but its regret rises steeply even at \(5\%\) corruption as early corrupted feedback steers it toward low-reward lists; it is the worst performer beyond \(5\%\) on all three datasets.

\paragraph{Cumulative regret as a function of rounds.} Figure~\ref{fig:CumulativeRegretRoundsY} shows cumulative regret over \(40k\) rounds on Yelp at several corruption levels. 
\MSUCB{} converges fastest in both low corruption setting and under heavy corruption, consistent with our theory.
\CBARBARNL{} exhibits a sharp convergence point that occurs much later than \CUCBV{}'s. This happens at the end of an epoch. In our runs, corruption ceases mid-epoch, but \CBARBARNL{} updates only at epoch boundaries. As a result, fresh observations are not incorporated immediately, exploration persists longer than necessary, and cumulative regret increases even though the method is robust to corruption.

Figures~\ref{fig:CumulativeRegretRoundsML} and~\ref{fig:CumulativeRegretRoundsLFM} show analogous trajectories over \(40K\) rounds on Yelp and LastFM at the same corruption levels. 
The algorithms exhibit similar qualitative behavior across datasets, corroborating the robustness and consistency of \MSUCB{}. 
As discussed earlier, \CRAC{} can catch up at the highest corruption rates. 
However, at low corruption levels, \MSUCB{} remains competitive with stochastic-optimal baselines, whereas \CRAC{} is suboptimal in the low-corruption regimes by a wide margin.

\section{Conclusion}

We study click fraud in \OLTR{} through the lens of cascading bandits with corruption. 
We introduce \MSUCB{}, a corruption-robust algorithm that integrates three key components: (1) a calibrated mean-of-medians estimator to obtain robust estimates and filter out corrupted feedback, (2) a variance-aware UCB radius to achieve optimal stochastic regret, and (3) a model selection wrapper that removes the need to know the corruption level in advance. 
Theoretically, \MSUCB{} has an optimal regret in the stochastic regime and an additive \(O(KC)\) corruption term. 
Empirically, \MSUCB{} consistently outperforms strong baselines on Yelp, MovieLens, and LastFM, even under substantial corruption.
As a secondary contribution, we extend \CBARBAR{} to the cascading setting, which is termed \CBARBARNL{}. We analyze its regret and use it as a competitive robust baseline.

This work closes an important gap by providing, to our knowledge, the first corruption-robust algorithm for cascading bandits that simultaneously achieves the stochastic optimum, is agnostic to the corruption level, and performs strongly in practice while incurring only an additive dependence on \(C\). 
Promising directions include tightening the additive term from \(O(KC)\) toward the \(O(C)\) benchmark, and developing matching lower bounds for cascading bandits under corruption.

\bibliographystyle{plainnat}
\bibliography{sample-sigconf}

@inproceedings{wei2022model,
  title={A model selection approach for corruption robust reinforcement learning},
  author={Wei, Chen-Yu and Dann, Christoph and Zimmert, Julian},
  booktitle={International Conference on Algorithmic Learning Theory},
  pages={1043--1096},
  year={2022},
  organization={PMLR}
}

@inproceedings{xu2021simple,
  title={Simple combinatorial algorithms for combinatorial bandits: Corruptions and approximations},
  author={Xu, Haike and Li, Jian},
  booktitle={Uncertainty in Artificial Intelligence},
  pages={1444--1454},
  year={2021},
  organization={PMLR}
}

@article{wang2017improving,
  title={Improving regret bounds for combinatorial semi-bandits with probabilistically triggered arms and its applications},
  author={Wang, Qinshi and Chen, Wei},
  journal={Advances in Neural Information Processing Systems},
  volume={30},
  year={2017}
}

@article{chen2016combinatorial,
  title={Combinatorial Multi-Armed Bandit: General Framework, Results and Applications},
  author={Chen, Wei and Wang, Yajun and Yuan, Yang},
  journal={Journal of Machine Learning Research},
  volume={17},
  number={1},
  pages={1--46},
  year={2016},
  publisher={JMLR.org}
}

@inproceedings{combes2015combinatorial,
  title={Combinatorial Bandits Revisited},
  author={Combes, Richard and Talebi, M Sadegh and Proutiere, Alexandre and Lelarge, Marc},
  booktitle={Advances in Neural Information Processing Systems},
  volume={28},
  pages={2116--2124},
  year={2015}
}

@inproceedings{kveton2015tight,
  title={Tight Regret Bounds for Stochastic Combinatorial Semi-Bandits},
  author={Kveton, Branislav and Szepesv{\'a}ri, Csaba and Wen, Zheng and Ashkan, Azin},
  booktitle={Proceedings of the 18th International Conference on Artificial Intelligence and Statistics},
  pages={535--543},
  year={2015},
  organization={PMLR}
}

@inproceedings{lykouris2018stochastic,
  title={Stochastic bandits robust to adversarial corruptions},
  author={Lykouris, Thodoris and Mirrokni, Vahab and Paes Leme, Renato},
  booktitle={Proceedings of the 50th Annual ACM SIGACT Symposium on Theory of Computing},
  pages={114--122},
  year={2018},
  organization={ACM}
}

@inproceedings{gupta2019better,
  title={Better algorithms for stochastic bandits with adversarial corruptions},
  author={Gupta, Anupam and Koren, Tomer and Talwar, Kunal},
  booktitle={Conference on Learning Theory},
  pages={1562--1578},
  year={2019},
  organization={PMLR}
}

@article{chen2025continuous,
  title={Continuous K-Max Bandits},
  author={Chen, Yu and Wang, Siwei and Huang, Longbo and Chen, Wei},
  journal={arXiv preprint arXiv:2502.13467},
  year={2025}
}

@article{zuo2023adversarial,
  title={Adversarial attacks on online learning to rank with click feedback},
  author={Zuo, Jinhang and Zhang, Zhiyao and Wang, Zhiyong and Li, Shuai and Hajiesmaili, Mohammad and Wierman, Adam},
  journal={Advances in Neural Information Processing Systems},
  volume={36},
  pages={41675--41692},
  year={2023}
}

@article{liu2022batch,
  title={Batch-size independent regret bounds for combinatorial semi-bandits with probabilistically triggered arms or independent arms},
  author={Liu, Xutong and Zuo, Jinhang and Wang, Siwei and Joe-Wong, Carlee and Lui, John and Chen, Wei},
  journal={Advances in Neural Information Processing Systems},
  volume={35},
  pages={14904--14916},
  year={2022}
}

@article{zhong2021breaking,
  title={Breaking the moments condition barrier: No-regret algorithm for bandits with super heavy-tailed payoffs},
  author={Zhong, Han and Huang, Jiayi and Yang, Lin and Wang, Liwei},
  journal={Advances in Neural Information Processing Systems},
  volume={34},
  pages={15710--15720},
  year={2021}
}

@article{xue2023efficient,
  title={Efficient algorithms for generalized linear bandits with heavy-tailed rewards},
  author={Xue, Bo and Wang, Yimu and Wan, Yuanyu and Yi, Jinfeng and Zhang, Lijun},
  journal={Advances in Neural Information Processing Systems},
  volume={36},
  pages={70880--70891},
  year={2023}
}

@article{lugosi2019mean,
  title={Mean estimation and regression under heavy-tailed distributions: A survey},
  author={Lugosi, G{\'a}bor and Mendelson, Shahar},
  journal={Foundations of Computational Mathematics},
  volume={19},
  number={5},
  pages={1145--1190},
  year={2019},
  publisher={Springer}
}

@article{wang2024stochastic,
  title={Stochastic Bandits Robust to Adversarial Attacks},
  author={Wang, Xuchuang and Zuo, Jinhang and Liu, Xutong and Lui, John and Hajiesmaili, Mohammad},
  journal={arXiv preprint arXiv:2408.08859},
  year={2024}
}

@article{xie2025cascading,
  title={Cascading bandits robust to adversarial corruptions},
  author={Xie, Jize and Chen, Cheng and Wang, Zhiyong and Li, Shuai},
  journal={arXiv preprint arXiv:2502.08077},
  year={2025}
}

@inproceedings{Ito2021Advances,
 author = {Ito, Shinji},
 booktitle = {Advances in Neural Information Processing Systems},
 editor = {M. Ranzato and A. Beygelzimer and Y. Dauphin and P.S. Liang and J. Wortman Vaughan},
 pages = {2654--2667},
 publisher = {Curran Associates, Inc.},
 title = {Hybrid Regret Bounds for Combinatorial Semi-Bandits and Adversarial Linear Bandits},
 url = {https://proceedings.neurips.cc/paper_files/paper/2021/file/15a50c8ba6a0002a2fa7e5d8c0a40bd9-Paper.pdf},
 volume = {34},
 year = {2021}
}

@inproceedings{kveton2015cascading,
  title={Cascading bandits: Learning to rank in the cascade model},
  author={Kveton, Branislav and Szepesvari, Csaba and Wen, Zheng and Ashkan, Azin},
  booktitle={International conference on machine learning},
  pages={767--776},
  year={2015},
  organization={PMLR}
}

@inproceedings{combes2015learning,
  title={Learning to rank: Regret lower bounds and efficient algorithms},
  author={Combes, Richard and Magureanu, Stefan and Proutiere, Alexandre and Laroche, Cyrille},
  booktitle={Proceedings of the 2015 ACM SIGMETRICS international conference on measurement and modeling of computer systems},
  pages={231--244},
  year={2015}
}

@inproceedings{zoghi2017online,
  title={Online learning to rank in stochastic click models},
  author={Zoghi, Masrour and Tunys, Tomas and Ghavamzadeh, Mohammad and Kveton, Branislav and Szepesvari, Csaba and Wen, Zheng},
  booktitle={International conference on machine learning},
  pages={4199--4208},
  year={2017},
  organization={PMLR}
}

@article{vial2022minimax,
  title={Minimax regret for cascading bandits},
  author={Vial, Daniel and Sanghavi, Sujay and Shakkottai, Sanjay and Srikant, Rayadurgam},
  journal={Advances in Neural Information Processing Systems},
  volume={35},
  pages={29126--29138},
  year={2022}
}

@article{zhong2021thompson,
  title={Thompson sampling algorithms for cascading bandits},
  author={Zhong, Zixin and Chueng, Wang Chi and Tan, Vincent YF},
  journal={Journal of Machine Learning Research},
  volume={22},
  number={218},
  pages={1--66},
  year={2021}
}

@inproceedings{li2020cascading,
  title={Cascading hybrid bandits: Online learning to rank for relevance and diversity},
  author={Li, Chang and Feng, Haoyun and Rijke, Maarten de},
  booktitle={Proceedings of the 14th ACM Conference on Recommender Systems},
  pages={33--42},
  year={2020}
}

@inproceedings{li2019online,
  title={Online learning to rank with features},
  author={Li, Shuai and Lattimore, Tor and Szepesv{\'a}ri, Csaba},
  booktitle={International Conference on Machine Learning},
  pages={3856--3865},
  year={2019},
  organization={PMLR}
}

@article{zong2016cascading,
  title={Cascading bandits for large-scale recommendation problems},
  author={Zong, Shi and Ni, Hao and Sung, Kenny and Ke, Nan Rosemary and Wen, Zheng and Kveton, Branislav},
  journal={arXiv preprint arXiv:1603.05359},
  year={2016}
}

@inproceedings{li2018online,
  title={Online clustering of contextual cascading bandits},
  author={Li, Shuai and Zhang, Shengyu},
  booktitle={Proceedings of the AAAI Conference on Artificial Intelligence},
  volume={32},
  number={1},
  year={2018}
}

@inproceedings{li2016contextual,
  title={Contextual combinatorial cascading bandits},
  author={Li, Shuai and Wang, Baoxiang and Zhang, Shengyu and Chen, Wei},
  booktitle={International conference on machine learning},
  pages={1245--1253},
  year={2016},
  organization={PMLR}
}

@inproceedings{Katariya2016DCM,
author = {Katariya, Sumeet and Kveton, Branislav and Szepesv\'{a}ri, Csaba and Wen, Zheng},
title = {DCM bandits: learning to rank with multiple clicks},
year = {2016},
publisher = {JMLR.org},
booktitle = {Proceedings of the 33rd International Conference on International Conference on Machine Learning - Volume 48},
pages = {1215–1224},
numpages = {10},
location = {New York, NY, USA},
series = {ICML'16}
}

@inproceedings{Largee2016Multiple,
author = {Lagr\'{e}e, Paul and Vernade, Claire and Capp\'{e}, Olivier},
title = {Multiple-play bandits in the position-based model},
year = {2016},
isbn = {9781510838819},
publisher = {Curran Associates Inc.},
address = {Red Hook, NY, USA},
booktitle = {Proceedings of the 30th International Conference on Neural Information Processing Systems},
pages = {1605–1613},
numpages = {9},
location = {Barcelona, Spain},
series = {NIPS'16}
}

@article{Harper2015MovieLens,
author = {Harper, F. Maxwell and Konstan, Joseph A.},
title = {The MovieLens Datasets: History and Context},
year = {2015},
issue_date = {January 2016},
publisher = {Association for Computing Machinery},
address = {New York, NY, USA},
volume = {5},
number = {4},
issn = {2160-6455},
url = {https://doi.org/10.1145/2827872},
doi = {10.1145/2827872},
journal = {ACM Trans. Interact. Intell. Syst.},
month = dec,
articleno = {19},
numpages = {19},
keywords = {Datasets, MovieLens, ratings, recommendations}
}

@inproceedings{Schedl2016LFM,
author = {Schedl, Markus},
title = {The LFM-1b Dataset for Music Retrieval and Recommendation},
year = {2016},
isbn = {9781450343596},
publisher = {Association for Computing Machinery},
address = {New York, NY, USA},
url = {https://doi.org/10.1145/2911996.2912004},
doi = {10.1145/2911996.2912004},
pages = {103–110},
numpages = {8},
keywords = {music recommendation, music information retrieval, experimentation, dataset, collaborative filtering, analysis},
location = {New York, New York, USA},
series = {ICMR '16}
}

@inproceedings{Golrezaei2021Learning,
author = {Golrezaei, Negin and Manshadi, Vahideh and Schneider, Jon and Sekar, Shreyas},
title = {Learning Product Rankings Robust to Fake Users},
year = {2021},
isbn = {9781450385541},
publisher = {Association for Computing Machinery},
address = {New York, NY, USA},
url = {https://doi.org/10.1145/3465456.3467580},
doi = {10.1145/3465456.3467580},
booktitle = {Proceedings of the 22nd ACM Conference on Economics and Computation},
pages = {560–561},
numpages = {2},
keywords = {fake users, online platforms, product ranking, robust learning, sequential search},
location = {Budapest, Hungary},
series = {EC '21}
}

@inproceedings{cao2007learning,
  title={Learning to rank: from pairwise approach to listwise approach},
  author={Cao, Zhe and Qin, Tao and Liu, Tie-Yan and Tsai, Ming-Feng and Li, Hang},
  booktitle={Proceedings of the 24th international conference on Machine learning},
  pages={129--136},
  year={2007}
}

@inproceedings{sculley2009large,
  title={Large scale learning to rank},
  author={Sculley, D and others},
  booktitle={NIPS Workshop on Advances in Ranking},
  volume={3},
  year={2009}
}

@inproceedings{hu2018reinforcement,
  title={Reinforcement learning to rank in e-commerce search engine: Formalization, analysis, and application},
  author={Hu, Yujing and Da, Qing and Zeng, Anxiang and Yu, Yang and Xu, Yinghui},
  booktitle={Proceedings of the 24th ACM SIGKDD international conference on knowledge discovery \& data mining},
  pages={368--377},
  year={2018}
}

@article{huzhang2021aliexpress,
  title={AliExpress Learning-To-Rank: Maximizing online model performance without going online},
  author={Huzhang, Guangda and Pang, Zhen-Jia and Gao, Yongqing and Liu, Yawen and Shen, Weijie and Zhou, Wen-Ji and Lin, Qianying and Da, Qing and Zeng, An-Xiang and Yu, Han and others},
  journal={IEEE Transactions on Knowledge and Data Engineering},
  volume={35},
  number={2},
  pages={1214--1226},
  year={2021},
  publisher={IEEE}
}

@inproceedings{immorlica2005click,
  title={Click fraud resistant methods for learning click-through rates},
  author={Immorlica, Nicole and Jain, Kamal and Mahdian, Mohammad and Talwar, Kunal},
  booktitle={international workshop on internet and network economics},
  pages={34--45},
  year={2005},
  organization={Springer}
}

@inproceedings{han2021adversarial,
  title={Adversarial combinatorial bandits with general non-linear reward functions},
  author={Han, Yanjun and Wang, Yining and Chen, Xi},
  booktitle={International Conference on Machine Learning},
  pages={4030--4039},
  year={2021},
  organization={PMLR}
}

@article{kveton2014matroid,
  title={Matroid bandits: Fast combinatorial optimization with learning},
  author={Kveton, Branislav and Wen, Zheng and Ashkan, Azin and Eydgahi, Hoda and Eriksson, Brian},
  journal={arXiv preprint arXiv:1403.5045},
  year={2014}
}

@misc{yelp2024open,
  title        = {Yelp Open Dataset},
  author       = {{Yelp Inc.}},
  year         = {2024},
  howpublished = {\url{https://business.yelp.com/data/resources/open-dataset/}},
  note         = {Accessed: 2025-10-07}
}

@misc{imdb_weighted_rating,
  title        = {IMDb Weighted Average Ratings},
  author       = {{IMDb}},
  year         = {2023},
  howpublished = {\url{https://help.imdb.com/article/imdb/track-movies-tv/weighted-average-ratings/GWT2DSBYVT2F25SK}},
  note         = {Accessed: 2025-10-04}
}

@misc{masurel_bayesian_average_star_ratings,
  author       = {Paul Masurel},
  title        = {Of Bayesian Average and Star Ratings},
  year         = {2013},
  howpublished = {\url{https://fulmicoton.com/posts/bayesian_rating/}},
  note         = {Published March 17, 2013. Accessed: 2025-10-04}
}

@inproceedings{yang2013combining,
  author       = {Yang, Zhihao and Li, Zhaochun and Huang, Zhenfan and Wu, Xiaolin and Cao, Lu},
  title        = {Combining Prestige and Relevance Ranking for Personalized Recommendation},
  booktitle    = {Proceedings of the 22nd ACM International Conference on Information and Knowledge Management (CIKM ’13)},
  year         = {2013},
  pages        = {821--830},
  address      = {San Francisco, California, USA},
  publisher    = {ACM},
  doi          = {10.1145/2505515.2507885}
}

@inproceedings{cheng_wide_deep_2016,
  author       = {Heng-Tze Cheng and Levent Koc and Jeremiah Harmsen and Tal Shaked and Tushar Chandra and Hrishi Aradhye and Glen Anderson and Greg Corrado and Wei Chai and Mustafa Ispir and Rohan Anil and Zakaria Haque and Lichan Hong and Vihan Jain and Xiaobing Liu and Hemal Shah},
  title        = {Wide \& Deep Learning for Recommender Systems},
  booktitle    = {Proceedings of the 1st Workshop on Deep Learning for Recommender Systems (DLRS) at ACM RecSys 2016},
  year         = {2016},
  pages        = {7--10},
  month        = sep,
  address      = {Boston, MA, USA},
  doi          = {10.1145/2988450.2988454},
  url          = {https://arxiv.org/abs/1606.07792}
}

@misc{yelp_open_dataset,
  author       = {{Yelp Inc.}},
  title        = {Yelp Open Dataset},
  year         = {2025},
  howpublished = {\url{https://business.yelp.com/data/resources/open-dataset}},
  note         = {Accessed: 2025-10-04}
}

@article{luca2016fake,
  title        = {Fake It Till You Make It: Reputation, Competition, and Yelp Review Fraud},
  author       = {Luca, Michael and Zervas, Georgios},
  journal      = {Management Science},
  volume       = {62},
  number       = {12},
  pages        = {3412--3427},
  year         = {2016},
  publisher    = {INFORMS},
  doi          = {10.1287/mnsc.2015.2304},
  url          = {https://pubsonline.informs.org/doi/10.1287/mnsc.2015.2304}
}

@misc{amazon_transparency_report_2023,
  author       = {{Amazon}},
  title        = {Amazon's Latest Actions Against Fake Review Brokers},
  year         = {2023},
  url          = {https://www.aboutamazon.com/news/policy-news-views/amazons-latest-actions-against-fake-review-brokers},
  note         = {Accessed: 2025-10-07}
}

\clearpage
\appendix
\section{Proof of Lemmas~\ref{lem:1} and~\ref{lem:2}}
\label{sec:appendix_A}
In this appendix we provide the proof to Lemmas~\ref{lem:1} and~\ref{lem:2}.
\paragraph{Setting and notation.}
There are \(K\) items (items) with unknown means \(\mu_k\in(0,1)\).
At each round \(t=1,2,\dots,T\), the learner recommends an ordered list
\(S_t=(k^{(1)},\dots,k^{(d)})\). The user scans the list top-down and clicks the
first attractive item. The round reward is \(R_t=\mathbf 1\{\text{at least one click}\}\)
with expectation \(r(S_t)=1-\prod_{k \in S_t} (1-\mu_{k})\).
Let \(S^\star\) be an optimal list and write the \(d\)-th largest mean as
\(\mu^{(d)}\). For a suboptimal item \(k \notin S^\star\), define the gap as
\(
\Delta_k  \coloneq  \mu^{(d)}-\mu_k > 0.
\)
Let \(T_k(t)\) be the number of observations of \(k\) up to time \(t\),
i.e., the number of rounds in which the agent pulled \(k\).
We adopt a corruption model where an oblivious adversary may flip observed feedback bits, let \(c_k(s)\) be the number of flips among the first \(s\) observations of \(k\) The total budget satisfies \(c_k(T) \le C\) for every item \(k\).

\paragraph{Estimator (mean of medians with calibration).}
After \(s=T_{k}(t-1)\) observations of \(k\), partition the \(s\) bits into
\(m=\lfloor s/b \rfloor\) consecutive blocks of size \(b\) uniformly at random, with \(b = \lceil \alpha \log s \rceil\), and \(b\) is adjusted to be odd by \(\pm1\) if needed (this affects only constants)..
Let \(M_{j,k}(s)\in\{0,1\}\) be the median in block \(j\),
and define the mean of medians \(\overline M_k(s)=\tfrac{1}{m}\sum_{j=1}^m M_{j,k}(s)\).
For \(p\in[0,1]\), set
\begin{align}\label{eq:q}
q_b(p) \coloneq \Pr \big(\mathrm{Bin}(b,p)\ge (b{+}1)/2\big),\qquad g_b \coloneq q_b^{-1}.
\end{align}
The calibrated estimator is
\begin{align}\label{eq:calib_est}
\hat \mu_k(s) \coloneq g_b\big(\overline M_k(s)\big),
\qquad
\hat v_k(s) \coloneq \hat \mu_k(s)\big(1-\hat \mu_k(s)\big).
\end{align}

Let \(\ell_{j, k}\in\{\lfloor s/m\rfloor,\lceil s/m\rceil\}\) be the size of block \(j\in[m]\) and
let \(C_{j, k}\) be the number of corrupted samples falling into block \(j\).
We define \(\mathcal{E}_k\) as an event where more than half of the samples in all blocks are uncorrupted, \(\mathcal{E}_k \coloneq \{\max_{1\le j\le m} C_{j, k}  <  \tfrac{1}{2} \ell_{j, k}\}\).

\MainLemmaOne*
 
\begin{proof}
Conditioned on the oblivious set of the \(C\) corrupted indices.
For each block \(j\), the count \(C_{j, k}\) has a hypergeometric distribution with parameters \((N,K,n)=(s,C,\ell_{j, k})\),
and the hypergeometric upper tail is dominated by the corresponding binomial tail, since replacement only increases the variance:
\begin{align*}
\Pr(C_{j, k} \ge t) \le \Pr \left(\mathrm{Bin}(\ell_{j, k},p)\ge t \right),\qquad p\coloneq C/s \le \nicefrac{1}{10}.
\end{align*}

\paragraph{Single–block tail at the half–block threshold.}
Set an item \(k\) and \(\ell'=\lceil \ell_{j, k}/2\rceil\) and write \(m_j=\mathbb E[\mathrm{Bin}(\ell_{j, k},p)]=\ell_{j, k} p\).
Define \(\delta\) gap
\[
\delta  \coloneq  \frac{\ell'-m_j}{m_j} \ge \frac{\frac{1}{2}\ell_{j, k}-\ell_{j, k} p}{\ell_{j, k} p} = \frac{\tfrac{1}{2}-p}{p}.
\]
By the Chernoff bound for binomials,
\begin{align}\label{eq:single_bound}
\Pr  \left(\mathrm{Bin}(\ell_{j, k},p)\ge (1+\delta)m_j \right)
 \le \exp  \left(- \frac{\delta^2}{3} m_j \right)
\overset{(a)}{\le} \exp  \left(- \tfrac{8}{15} \ell_{j, k} \right),
\end{align}
Where in (a) we replace \(p\) with its upper bound \(0.1\).

\paragraph{A uniform lower bound on block sizes.}
With $m=\lceil s/b\rceil$ and $b=\lceil \alpha\log s\rceil$, for $s$ large enough we have $s\ge 2b$,
and hence
\begin{align*}
\ell_{j, k} \ge \Big\lfloor \frac{s}{m}\Big\rfloor \ge \Big\lfloor \frac{s}{\lceil s/b\rceil}\Big\rfloor
\ge \Big\lfloor \frac{s}{s/b+1}\Big\rfloor \ge \Big\lfloor \frac{b s}{s+b}\Big\rfloor \ge \frac{b}{2}
\eqcolon  \ell_{\min}.
\end{align*}

\paragraph{Union bound over blocks.}
Using \eqref{eq:single_bound} and the bound on \(\ell_{\min}\),
\begin{align}
\Pr  \left(\exists j\in[m]: C_{j, k} \ge \tfrac{1}{2}\ell_{j, k} \right)
\le& m\cdot \exp \left(- \frac{8}{15}\ell_{\min}\right)\notag\\
\le& \left(\frac{s}{b}+1\right)\exp \left(- \frac{4}{15}b\right)\notag\\
\le& \left(\frac{2s}{b}\right)\exp \left(- \frac{4}{15}\alpha\log s\right)\notag\\
\le& \frac{2}{\alpha\log s}  s^{ 1-\frac{4}{15}\alpha} \overset{(a)}{\le} s^{-3}.
\end{align}
Where (a) is due to the fact that we choose \(\alpha>15\), therefore \(1-\frac{4}{15}\alpha\le -3\).
Therefore, with probability at least $1-s^{-3}$, every block has strictly fewer than $\ell_{j, k}/2$
corrupted samples. 
\end{proof}

\MainLemmaTwo*

\begin{proof}
\emph{Step 1 (distribution under the good event).}
Assuming event \(\mathcal{E}_k\), each block median has Bernoulli distribution, hence due to \eqref{eq:q}
\begin{align*}
M_{j, k} \sim \mathrm{Ber} \left(q_b(\mu_k) \right),
\qquad
\{M_{j, k}\}_{j=1}^m \text{ are independent}.
\end{align*}

emph{Step 2 (concentration in \(q\)-space).}
Bernstein’s inequality for bounded variables gives, for any \(\delta\in(0,1)\),
\begin{align*}
\Pr  \left(\big|\overline M_k(s) - q_b(\mu_k)\big|
\le
\sqrt{\tfrac{2 q_b(\mu_k)\left(1-q_b(\mu_k)\right) \log(2/\delta)}{m}}
 + 
\tfrac{2 \log(2/\delta)}{3m}
 \right)\\
 \ge 1-\delta.
\end{align*}
Since \(m=\lceil s/b\rceil\) we have \(1/m\le b/s\), hence
\begin{align}\label{eq:q_space}
\big|\overline M_k(s) - q_b(\mu_k)\big|
 \le
\sqrt{\frac{2b q_b(\mu_k)\left(1-q_b(\mu_k)\right) \log(2/\delta)}{s}}&
 + 
\frac{2b \log(2/\delta)}{3s}\notag\\
& \text{w.p. }\ge 1-\delta.
\end{align}

\emph{Step 3 (calibration back to \(\mu\)-space).}
Now we apply the mean–value theorem to \(g_b\) between \(q_b(\mu_k)\) and \(\overline M_k(s)\).
There exists \(\eta\) between these two points such that
\begin{align*}
\hat\mu_k - \mu_k
 = g_b(\overline M_k(s)) - g_b  \left(q_b(\mu_k) \right)
 \overset{\text{MVT}}{=} g_b'(\eta)  \left(\overline M_k(s) - q_b(\mu_k) \right).
\end{align*}
Using the inverse–function rule \(g_b'(y)=1/q_b'  \left(g_b(y) \right)\), and setting
\(\xi \coloneq g_b(\eta)\in(0,1)\) (which lies between \(\mu_k\) and \(\hat\mu_k(s)\)), we get
\begin{align}\label{eq:calib}
\big|\hat\mu_k(s)-\mu_k\big|
 = \frac{\big|\overline M_k(s) - q_b(\mu_k)\big|}{q_b'(\xi)}.
\end{align}

Finally, combining \eqref{eq:q_space} and \eqref{eq:calib} yields, with probability at least \((1-\delta)\),
\begin{align*}
\big|\hat\mu_k(s)-\mu_k\big|
 \le
\frac{1}{q_b'(\xi)} \left[
\sqrt{\tfrac{2b q_b(\mu_k)\left(1-q_b(\mu_k)\right) \log(2/\delta)}{s}}
 + 
\tfrac{2b \log(2/\delta)}{3s}
\right] \\
\qquad(\xi\in(0,1)).
\end{align*}
\end{proof}

\section{Proof of Theorem~\ref{thm:median_ucb_regret}}
\label{sec:appendix_B}
To bound the regret of Algorithm~\ref{alg:mom_ucb_v}, we first control the estimation error \(|\widetilde\mu_k-\mu_k|\) via Lemma~\ref{lem:3}, which gives a high–probability deviation bound driven by the empirical variance plus a small-sample term. Using the inequality in~\eqref{eq:lem_3}, we enforce a sufficiency condition~\eqref{eq:suff} to define a per–item empirical threshold \(\tau\) after which a suboptimal item’s index falls below that of any optimal item. By upper-bounding the number of times each suboptimal item can be selected by this threshold and plugging these counts into the cascade regret decomposition of \citet{kveton2015cascading} (Theorem~1), we obtain the final gap-dependent regret bound.

\MainLemmaThree*

\begin{proof}
By Lemma~\ref{lem:2}, we know that,
for some \(\xi_s\) between \(\mu_k\) and \(\hat\mu_k(s)\). Hence
\begin{align*}
x \le \frac{1}{q_{b}'(\xi_s)}
\Big(\sqrt{2 b q_b(\mu_k)(1-q_b(\mu_k)) a_t} + \tfrac{2 b}{3} a_t\Big).
\end{align*}

\paragraph{Variance Domination and Self-Bounding.}
For odd \(b\), the majority map pushes away from \(1/2\), so
\(q_b(p)(1-q_b(p))\le p(1-p)\) for all \(p\in[0,1]\)
(hence the Bernoulli variance cannot increase under majority).
We use the Bernoulli self-bounding inequality
\(\mu_k(1-\mu_k)\le \hat v_k(s)+x\) to obtain
\[
x \le \frac{\sqrt{2 b a_t}}{q_{b}'(\xi_s)}
\sqrt{\hat v_k(s)+x} + \frac{2 b}{3 q_{b}'(\xi_s)} a_t.
\]

\paragraph{Quadratic resolution.}
Let \(\alpha_0\coloneq \tfrac{\sqrt{2 b}}{q_{b}'(\xi_s)}\) and \(\beta_0\coloneq \tfrac{2 b}{3 q_{b}'(\xi_s)}\).
Then \(x\le \alpha_0\sqrt{(\hat v_k(s)+x) a_t}+\beta_0 a_t\).
We know that \(\sqrt{\hat v+x}\le \sqrt{\hat v}+\sqrt{x}\),

We start from
\[
x-\alpha_0\sqrt{x a_t} \le \alpha_0\sqrt{\hat v_k(s) a_t}+\beta_0 a_t.
\]
Then
\[
x-\alpha_0\sqrt{a_t x}-\bigl(\alpha_0\sqrt{\hat v_k(s) a_t}+\beta_0 a_t\bigr) \le 0.
\]
This is a quadratic inequality in \(\sqrt{x}\). Its nonnegative root yields
\[
\sqrt{x} \le \frac{\alpha_0\sqrt{a_t}+\sqrt{\alpha_0^2 a_t+4\alpha_0\sqrt{\hat v_k(s) a_t}+4\beta_0 a_t}}{2}.
\]
Squaring both sides and expanding, then applying the inequality 
\(\sqrt{u+v}\le \sqrt{u}+\tfrac{v}{2\sqrt{u}}\), which is the Taylor's expansion of the square root function and its concavity,  with 
\(u=\alpha_0^2 a_t\), \(v=4\alpha_0\sqrt{\hat v_k(s) a_t}+4\beta_0 a_t\), 
gives
\[
x  \le 2\alpha_0\sqrt{\hat v_k(s) a_t} + \bigl(2\beta_0+\alpha_0^2\bigr)a_t.
\]

hence \(x\le A_s\sqrt{\hat v_k(s) a_t}+B_s a_t\) with
\[
A_s=2\alpha_0=\frac{2\sqrt{2 b}}{q_{b}'(\xi_s)},
\quad
B_s=2\beta_0+\tfrac12\alpha_0^2=\frac{4 b}{3 q_{b}'(\xi_s)}+\frac{b}{\big(q_{b}'(\xi_s)\big)^2}.
\]
\end{proof}

Fix the horizon \(T\) and list size \(d\).
For item \(k\) at round \(t\),  Lemma~\ref{lem:3} states that,
for confidence level \(\delta\in(0,1)\),
\[
|\hat\mu_k(s)-\mu_k|
 \le 
A_s\sqrt{\tfrac{\hat v_k(s) \log(2/\delta)}{s}}
 + 
B_s\tfrac{\log(2/\delta)}{s}
 \quad \text{w.p. }\ge 1-\delta,
\]
with \(A_s=\tfrac{2\sqrt{2 b}}{q_{b}'(\xi_s)}\), \(B_s=\tfrac{4b}{3q_{b}'(\xi_s)}+\tfrac{b}{(q_{b}'(\xi_s))^2}\),
\(b=\lceil \alpha\log s\rceil\).

We set
\begin{align*}
\delta_{k, t} \coloneq  \frac{1}{(dT)^2 K \zeta(4)} t^{-4},
\qquad
\zeta(4)=\sum_{u=1}^{\infty}u^{-4}=\frac{\pi^4}{90}.
\end{align*}
Then \(\sum_{i=1}^L\sum_{t=1}^\infty \delta_{k, t}=(nK)^{-2}\).
Applying the above deviation with \(\delta=\delta_{k, t}\) yields, for each \((i,t)\),
\begin{align*}
|\hat\mu_k(s)-\mu_k|
\le
A_s\sqrt{\frac{\hat v_k(s) [ 4\log t+\Gamma ]}{s}}
 + 
B_s\frac{4\log t+\Gamma}{s},
\end{align*}
where \(\Gamma \coloneq \log \big(2(nK)^2 L\zeta(4)\big)\) is round- and item-independent and can be absorbed in the constant.

Define time-uniform coefficients
\[
A  \coloneq  \sqrt{5} \sup_{1\le s\le T} A_s,
\qquad
B  \coloneq  (4+\Gamma) \sup_{1\le s\le T} B_s,
\]
Then, for each fixed \((k,t)\),
\[
\mu_k \le \hat\mu_k(s) + A\sqrt{\frac{\hat v_k(s) \log t}{s}} + B\frac{\log t}{s}
\quad\text{w.p. }\ge 1-\delta_{k, t}.
\]

\smallskip
\noindent\emph{Union bound over all items and rounds.}
Let \(\mathcal G\) be the event that the above display holds \emph{simultaneously}
for all \(i\in[L]\) and all \(t\in\{1,\dots,T\}\).
By the union bound,
\[
\Pr(\mathcal G) \ge 1-\sum_{i=1}^L\sum_{t=1}^{T}\delta_{k, t}
 \ge 1-\sum_{i=1}^L\sum_{t=1}^{\infty}\delta_{k, t}
 = 1-(nK)^{-2}.
\]
Therefore, on \(\mathcal G\),
\begin{align*}
\mu_k \le \hat\mu_k(T_{t-1}(k))
 + \underbrace{A\sqrt{\frac{\hat v_k(T_{t-1}(k)) \log t}{T_{t-1}(k)}} + B\frac{\log t}{T_{t-1}(k)}}_{\eqcolon \rho_k(t)},\quad
\forall i, \forall t.
\end{align*}
With the radius being set to 
\begin{align*}
\rho_k(t) \coloneq 
A\sqrt{\tfrac{\hat v_k \big(T_{t-1}(k)\big) \log t}{T_{t-1}(k)}} + 
B \tfrac{\log t}{T_{t-1}(k)}.
\end{align*}
The index of item \(k\) at round \(t\) is
\[
U_t(k) \coloneq \hat\mu_k \big(T_{t-1}(k)\big) + \rho_k(t).
\]
At round \(t\), the learner recommends the \(d\) items with the largest values of \(U_t(\cdot)\).

Therefore with probability at least \(1-(nK)^{-2}\) on which, simultaneously for all items \(k\) and all rounds \(t\),
\[
\big|\hat\mu_k \big(T_{t-1}(k)\big)-\mu_k\big|
 \le \rho_k(t).
\]
In particular, assuming \(\mathcal G\),
\begin{align}\label{eq:index_ineq}
U_t(k) = \hat\mu_k \big(T_{t-1}(k)\big)+\rho_k(t)
 \le \mu_k+2 \rho_k(t),
\quad
U_t(k) \ge \mu_k.
\end{align}

Therefore a sufficient condition for any suboptimal item \(k\)'s index to be less than optimal item \(k^*\) at round \(t\) is 
\begin{align}\label{eq:suff}
\rho_k(t) \le \tfrac12 \Delta_{k, k^*}.
\end{align}

Define the threshold
\begin{align}\label{eq:tau-emplicit}
\tau_{k, k^*}(t) \coloneq\
\inf\Bigl\{s\ge 10 C: 
A\sqrt{\tfrac{\hat v_k(s) \log t}{s}}
 +  B\tfrac{\log t}{s}
 \le \tfrac12 \Delta_{k, k^*}
\Bigr\}.
\end{align}
And thus
\begin{align}\label{eq:tau-emp}
\tau_{k, k^*}(t) \coloneq\
\inf\Bigl\{  s\ge 10 c_k: 
s \ge \frac{16 A^2 \hat v_k(s) \log t}{\Delta_{k, k^*}^2}
 \text{ and }\
s \ge \frac{4 B \log t}{\Delta_{k, k^*}}
\Bigr\}.
\end{align}
Therefore, assuming that the optimal items are \(\{k^{(1)}, \dots, k^{(d)}\}\) such that \(\mu^{(1)} \ge \dots \ge k^{(d)}\), the suboptimal item \(k\) can be mistaken with the \(j\)-th optimal item at most \(m_{k, j} \coloneq \tau_{k, j} - \tau_{k, j - 1}\) times. 

From Theorem~1 in \citet{kveton2015cascading} we can write the regret as
\begin{align*}
R(T) =& \sum_{k \notin S^*} \sum_{j \in S^*} \Delta_{k, j} m_{k, j}\\
\le& \sum_{k \notin S^*} \sum_{j \in S^*} 
\begin{aligned}\Delta_{k, j} \Big[&\left(\frac{16 A^2 \hat v_k(s) \log T}{\Delta_{k, j}^2} - \frac{16 A^2 \hat v_k(s) \log T}{\Delta_{k, j-1}^2}\right)\\
+&\left(\frac{4 B \log T}{\Delta_{k, j}} - \frac{4 B \log T}{\Delta_{k, j-1}}\right)\Big]
\end{aligned}\\
\le& \sum_{k \notin S^*} \sum_{j \in S^*} 
\begin{aligned}\Delta_{k, j} \log T \Big[\mu_k(1 - \mu_k)\left(\frac{1}{\Delta_{k, j}^2} - \frac{1}{\Delta_{k, j-1}^2}\right)\\
+ \left(\frac{1}{\Delta_{k, j}} - \frac{1}{\Delta_{k, j-1}}\right)\Big]
\end{aligned}\\
\overset{(a)}{\le}& \sum_{k \notin S^*} \log T\left[\frac{\mu_k(1 - \mu_k)}{\Delta_{k}} + 1 + \log(\frac{1}{\Delta_{k}})\right]\\
\le& \sum_{k \notin S^*} \frac{\log T}{\Delta_k}
\end{align*}
In inequality~(a), we apply Lemma~3 of \citet{kveton2014matroid} to bound the first term and use a telescoping–sum argument for the second term. This completes the proof of Theorem~\ref{thm:median_ucb_regret}. \qed

\section{\CBARBARNL{}: Extending \CBARBAR{} to Cascading Bandits}
\label{sec:appendix_C}
We extend \CBARBAR{}~\cite{xu2021simple} to \CBARBARNL{} by replacing the linear reward evaluations with the cascading reward.
In particular, Lines~\ref{line:optimistic_reward}–\ref{line:pessimistic_rewards} and the selections in Lines~\ref{line:update_arm_superarms}–\ref{line:update_best_superarm} now use \(R(\cdot,\cdot)\) for cascade feedback, while the epoch schedule, probability mixing, and gap updates follow the original template.
In particular, Lines~\ref{line:optimistic_reward}–\ref{line:pessimistic_rewards} and the selections in Lines~\ref{line:update_arm_superarms}–\ref{line:update_best_superarm} 
now use the cascading reward function in~\eqref{eq:reward}, while the epoch schedule, probability mixing, and gap updates follow the original template.

\begin{proposition}
\label{prop:cbarbar}
The regret of \CBARBARNL{} (Algorithm\ref{alg:cbarbar}) is bounded by:
\begin{align}\label{eq:cbarbar}
\Reg(T) \le O\left(dC + \frac{d^2 K}{\Delta_{\min}} \log^2 T \right).
\end{align}
\end{proposition}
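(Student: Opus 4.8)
The plan is to follow the epoch-based template of \CBARBAR{}~\cite{xu2021simple} and the \texttt{BARBAR} algorithm of \citet{gupta2019better}, replacing every linear reward evaluation by the cascade reward $R(\cdot,\cdot)$ from~\eqref{eq:reward} and re-deriving the two deviation/bias bounds that drive the analysis. First I would fix the epoch structure: partition the horizon into epochs $m=1,2,\dots$ of geometrically growing length, and in epoch $m$ sample each item $k$ a number of times $n_{k,m}$ that scales like $\log T/\hat\Delta_{k,m-1}^2$, where $\hat\Delta_{k,m-1}$ is the gap estimate carried over from the previous epoch. Within each epoch the algorithm forms empirical per-item click estimates from the cascade feedback, builds optimistic and pessimistic cascade rewards using $R(\cdot,\bm{\mu})$, and updates the gap estimates from the difference between the best pessimistic superarm and each optimistic one. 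The remaining book-keeping (probability mixing, forced-exploration floor, and the gap-halving update) is inherited verbatim from the original template.

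The first substantive step is a concentration lemma: under a good event holding with high probability, each item's empirical click estimate deviates from its true mean by at most a sampling term $O(\sqrt{\log T/n_{k,m}})$ plus a \emph{corruption bias} equal to the per-item corruption mass accumulated in that epoch. Because the model~\eqref{eq:c} charges only $\max_{k\in S(t)}|c_{k,t}|$ per round, a single corrupted round can perturb all $d$ observed items simultaneously; summing the per-item corruption bias across a length-$d$ list is therefore where the factor $d$ in the $dC$ term originates. The second, and genuinely non-linear, step transfers these per-item errors to the reward scale. Here I would invoke the bounded-smoothness (coordinatewise $1$-Lipschitz) property of the cascade reward, $|R(S,\bm{\mu})-R(S,\hat{\bm{\mu}})|\le \sum_{k\in S}|\mu_k-\hat\mu_k|$, which holds because $R$ is a product of factors each $1$-Lipschitz in its coordinate. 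This replaces the exact linear decomposition available to \CBARBAR{} and lets the superarm reward estimate inherit a deviation bound of the form (sampling term $+$ corruption term) summed over the $d$ items of the played list.

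With these two ingredients the remainder mirrors the \texttt{BARBAR} gap-tracking argument. I would show that on the good event the estimated gaps stay within a constant factor of the true gaps $\Delta_k$, up to an additive slack controlled by the accumulated corruption, so that no item is grossly over- or under-sampled. Decomposing the per-epoch regret through the cascade regret identity of \citet{kveton2015cascading} (Theorem~1) expresses the superarm regret as a sum over suboptimal/optimal item pairs weighted by their gaps; plugging in $n_{k,m}\asymp \log T/\hat\Delta_{k,m-1}^2$ and summing the geometric epoch schedule yields the stochastic term $O\!\big(\tfrac{d^2 K}{\Delta_{\min}}\log^2 T\big)$ — one $\log T$ from the confidence level, a second from the number of epochs, one factor $d$ from the list length in the smoothness bound, and a second $d$ from the cascade decomposition — while the corruption contributions collect into the additive $O(dC)$ term.

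The main obstacle I anticipate is precisely the non-linearity. In the linear combinatorial setting the reward is an exact linear functional of the base-arm means, so semi-bandit feedback gives unbiased per-arm estimates and the decomposition is immediate. For cascading bandits both the reward $1-\prod_k(1-\mu_k)$ and the feedback (observed only up to the first click) are non-linear and truncated, so the delicate part is verifying that the smoothness inequality together with the Kveton et al.\ decomposition is tight enough to preserve an \emph{additive} $dC$ dependence rather than degrading it to a multiplicative one, and that the truncated cascade feedback still yields enough per-item observations — after the forced-exploration floor — to keep the concentration lemma valid.
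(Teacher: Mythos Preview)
Your proposal is correct and follows essentially the same route as the paper's sketch: adapt the \CBARBAR{} epoch-and-gap-tracking analysis of \citet{xu2021simple}, replace the exact linear decomposition by a Lipschitz bound on the cascade reward, and trace the $dC$ term to the fact that one corrupted round perturbs all $d$ observed items (the paper makes this explicit via the elementary inequality $\max_{k\in S(t)}|c_{k,t}|\le \sum_{k\in S(t)}\max_{j\in S(t)}|c_{j,t}|$, applied inside Lemmas~4--7 of \citet{xu2021simple}). The one point worth flagging is that the paper invokes \emph{bi}-Lipschitz bounds for the cascade reward (citing \citet{chen2025continuous}), i.e.\ also an inverse-Lipschitz direction, whereas you state only the forward bounded-smoothness inequality; the forward direction covers your error-transfer step, but the reverse direction is what guarantees the reward-level gap estimates $\Delta_k^m$ in Line~\ref{line:update_gap} faithfully track the item-level gaps $\Delta_k$ that appear in the final bound, so you should make that direction explicit in the gap-tracking lemma.
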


To prove Corollary~\ref{prop:cbarbar}, we follow the analysis of \citet{xu2021simple} and extend it to the cascading bandit setting by invoking bi-Lipschitz bounds for the cascade reward (see \cite{chen2025continuous}). In particular, for the per-round corruption aggregator we use the elementary inequalities
\[
\frac{1}{d}\sum_{k\in S(t)} \max_{j\in S(t)} |c_{j,t}|
 \le 
\max_{k\in S(t)} |c_{k,t}|
 \le 
\sum_{k\in S(t)} \max_{j\in S(t)} |c_{j,t}|,
\]
and apply these bounds in Lemmas~4–7 where the linear proof uses linearly decomposed rewards. This substitution controls the corruption terms under cascade feedback and yields the regret bound stated in~\eqref{eq:cbarbar}.

\begin{algorithm}[ht]
\caption{\CBARBAR{} with cascading reward (\CBARBARNL{})}
\label{alg:cbarbar}
\begin{algorithmic}[1]
\Input Confidence parameter \( \delta \in (0,1) \), time horizon \( T \)

\State Initialize \( \Delta_{k}^{1} \gets 1 \) and \( S_{k}^{1} \) as any valid list containing item \(k\), for all \(k \in [K]\). \label{line:init_delta}
\State Initialize \( S_{*}^{1} \) as any valid list. \label{line:init_Zstar}
\State Set \( \lambda \gets 1024\log^{2}\!\left(\frac{8K}{\delta}\log^{2} T\right) \). \label{line:init_lambda}

\For{epochs \( m = 1, 2, \dots \)} \label{line:epoch_loop}
    \State \( n_{*}^{m} \gets \lambda d^{2} K \cdot 2^{(m-1)/2} \) \Comment{Pull count for best list} \label{line:n_star_case_else}

    \For{each item \( k \in [K] \)}
        \State \( n_{k}^{m} \gets \lambda \left( \frac{\Delta_{k}^{m}}{d} \right)^{-2} \) \Comment{Pull counts from gap estimates} \label{line:compute_nim}
    \EndFor

    \State \( N^{m} \gets \sum_{k=1}^{K} n_{k}^{m} + n_{*}^{m} \) \Comment{Total pulls this epoch} \label{line:Nm_total}
    \State Set \( q_{k}^{m} \gets \frac{n_{k}^{m}}{N^{m}} \), \( q_{*}^{m} \gets \frac{n_{*}^{m}}{N^{m}} \) \Comment{Sampling probs.} \label{line:sampling_probs}

    \For{rounds \( t = T_{m-1}+1, \dots, T_{m} \)}
        \State Sample \( S_{k}^{m} \) with prob. \( q_{k}^{m} \), or \( S_{*}^{m} \) with prob. \( q_{*}^{m} \), and observe \( \widetilde X_{t,k} \) for every \( k \in S_{t} \). \label{line:sample_superarm}
    \EndFor

    \State \( \hat{\mu}_{k}^{m} \gets \frac{1}{n_{k}^{m}} \sum_{t \in E_{m}} \widetilde X_{t,k} \cdot \mathbb{I}[S_{t} = S_{k}^{m}] \) \Comment{Empirical means} \label{line:update_empirical}

    \State \( r_{*}^{m} \gets \max_{S \in \mathcal{S}}   R\!\left(S(t), \bm{X}_{t}\right) \) \Comment{Optimistic cascade reward} \label{line:optimistic_reward}

    \State \( r_{k}^{m} \gets \max_{S \in \mathcal{S}: \, k \in S}   R\!\left(S(t), \bm{X}_{t}\right) \) \Comment{Pessimistic cascade rewards} \label{line:pessimistic_rewards}

    \State \( S_{k}^{m+1} \gets \arg\max_{S \in \mathcal{S}: \, k \in S}   R\!\left(S(t), \bm{X}_{t}\right) \) \Comment{Best item–specific lists} \label{line:update_arm_superarms}

    \State \( S_{*}^{m+1} \gets \arg\max_{S \in \mathcal{S}}   R\!\left(S(t), \bm{X}_{t}\right) \) \Comment{Estimated best list} \label{line:update_best_superarm}

    \State \( \Delta_{k}^{m+1} \gets \max \!\left( 2^{-m/4},   r_{*}^{m} - r_{k}^{m},   \frac{\Delta_{k}^{m}}{2} \right) \) \Comment{Update gap estimates} \label{line:update_gap}
\EndFor

\end{algorithmic}
\end{algorithm}

\end{document}